\title[Spectrally Normalized Memory Networks]{Computationally Light Spectrally Normalized Memory Neuron Network based Estimator for GPS-denied operation of Micro-UAV}
\pgfplotsset{compat=1.17}
\DeclareMathOperator*{\argmin}{\arg\!\min}
\DeclareMathSymbol{\shortminus}{\mathbin}{AMSa}{"39}
\newcommand*\widefbox[1]{\fbox{\hspace{2em}#1\hspace{2em}}}
\tikzset{%
	every neuron/.style={
		circle, 
		draw,
		minimum size = 0.7cm
	},
	neuron memory/.style={
		circle,
		draw,
		minimum size = 0.5cm, 
		fill=black
	},
	neuron missing/.style={
		draw=none,
		scale=2.5,
		text height=0.133cm,
		execute at begin node=\color{black}$\vdots$
	},
}
\author{%
 \Name{Nishanth Rao} \Email{nishanthrao@iisc.ac.in}\\
 \addr Indian Institute of Science, Bengaluru
 \AND
 \Name{Suresh Sundaram} \Email{vssuresh@iisc.ac.in}\\
 \addr Indian Institute of Science, Bengaluru
 \AND
 \Name{Varun Raghavendra} \Email{varuncr@iisc.ac.in}\\
 \addr Indian Institute of Science, Bengaluru
}
\begin{document}

\maketitle

\begin{abstract}%
 This paper addresses the problem of position estimation in UAVs operating in a cluttered environment where GPS information is unavailable. A learning-based approach is proposed that takes in the rotor RPMs and past state as input and predicts the one-step-ahead position of the UAV using a novel spectral-normalized memory neural network (SN-MNN). The spectral normalization guarantees stable and reliable prediction performance. The predicted position is transformed to the global coordinate frame (GPS), which is then fused along with the odometry of other peripheral sensors like IMU, barometer, compass, etc., using the onboard extended Kalman filter (EKF) to estimate the states of the UAV. The experimental flight data collected from an RTK-GPS facility using a micro-UAV is used to train the SN-MNN. The \texttt{PX4-ECL} library is used to fuse the predicted data using the SN-MNN, and the estimated position is compared with actual ground truth data. The proposed algorithm doesn't require any additional onboard sensors and is computationally light. The performance of the proposed approach is compared with the current state-of-art GPS-denied algorithms, and it can be seen that the proposed algorithm has the least RMSE for position estimates.
\end{abstract}

\section{Introduction}

Advancements in UAV technology have enabled their widespread usage in logistic transportation, urban air mobility, and agriculture. A crucial aspect of the UAV flight is the accuracy of the onboard navigation system that provides a sense of whereabouts to the UAV controller. The onboard navigation system relies heavily on GPS sensors that accurately estimate the position of the UAV. However, in cluttered environments like forests and indoor environments, there is an intermittent loss of GPS (sometimes no GPS signal), which can lead to inaccurate position estimates, rendering the UAV unstable. Thus, it is crucial to look for GPS-denied alternatives to provide reliable position information to the UAV.

The existing literature on algorithms developed for GPS-denied operation can be broadly divided into two categories: algorithms that use reliable position estimates generated by either vision-based systems or simultaneous localization and mapping (SLAM)-based systems. In \citep{mohta2018fast}, stereo camera-based localization is employed for fast and agile navigation of the UAV in the presence of surrounding obstacles. In \citep{mebarki2014image}, the concept of \textit{image moments} is utilized from environment images to estimate the translational velocity of the UAV reliably. A combination of optical flow and ultrasound sensors has been used in \citep{tsai2016optical} to achieve a stable indoor hovering performance of a micro aerial vehicle. However, vision-based localization systems fail to provide reliable odometry in varying lighting conditions and exhibit drift when no features are available to track. Further, one needs to tune the parameters meticulously to achieve reliable performance in a known environment\citep{balamurugan2016survey}.

On the other hand, SLAM-based methods mostly require LiDAR sensors or compatible vision-based sensors. In \citep{kim20056dof}, a 6-DOF SLAM is proposed that builds a relative map from surrounding features and provides odometry measurements relative to this map. In \citep{urzua2017vision}, a vision-based SLAM algorithm is employed using a monocular camera and an ultrasound camera for navigation. To alleviate the problems specific to vision-based sensors, \citep{khattak2019robust} employs an infrared thermal sensor to obtain localization information in the presence of dark, texture-less environments with dust-filled / smoke-filled settings. However, SLAM-based methods work efficiently only in a cluttered environment with some external objects that can be tracked throughout the flight. When flying in areas without any surrounding objects or in fast dynamic environments, vision-based odometry and SLAM-based methods fail to provide accurate position estimates to the UAV. Reliable vision-based odometry systems require high FPS performance cameras that are prohibitively expensive. Moreover, these methods require additional onboard specialized sensors that can be limiting in a micro-UAV setting and tend to consume extra power that can limit the overall endurance of the UAV.

In this paper, a \textit{data-driven model learning}-based approach is proposed to estimate the UAV position using a Spectrally Normalized Memory Neuron Network (SN-MNN) that is invariant to the environmental features and external factors like surrounding objects, varying lighting conditions, etc. The SN-MNN predicts the position of the UAV based on the rotor RPM input and previous UAV states. It is shown theoretically that spectral normalization guarantees a stable prediction performance by constraining the Lipschitz constant of the fitted function. The look-ahead predicted position is transformed to a global coordinate (GPS), and extended Kalman filter-based state fusion is used to estimate the UAV states. The experimental flight data from an RTK-GPS facility is used for training the SN-MNN. The model learning-based approach is validated using the \texttt{PX4-ECL} library on sample test flights. Finally, the performance of the proposed algorithm is compared with other state-of-art GPS-denied algorithms. It can be seen that the proposed algorithm has the least RMSE in predicting the position of the UAV in comparison to other techniques.

\section{Spectral-Normalized Memory Neuron Network-based State Estimation}
\label{sec:section_2}
First, this section presents the UAV dynamic equations and input-output model. Next, the novel data-driven spectrally normalized memory neuron network is presented to predict the position of the UAV from the past state and current input. Finally, the predicted local position is converted to global coordinates (GPS) and state fusion is carried out to estimate the UAV states.

\subsection{Input-Output Model of UAV}
The physics-based mathematical model of a typical UAV system is given below:
\begin{subequations}
\begin{empheq}[box=\widefbox]{align}
    \dot{\pmb{x}} &= \pmb{v}, \ \ \ m\dot{\pmb{v}} = mg\hat{\pmb{k}} + \pmb{R}\hat{\pmb{k}}f_t + \Tilde{\pmb{f}} \\
    \dot{\pmb{R}} &= \pmb{R}\pmb{\Omega}^{\times}, \ \ \ \pmb{J}\dot{\pmb{\Omega}} + \pmb{\Omega}^{\times}\pmb{J\Omega} = \pmb{\tau} + \Tilde{\pmb{\tau}} \\
    \text{with}, \ \ f_t &= K_\omega\left(\omega_1^2 + \omega_2^2 + \omega_3^2 + \omega_4^2 \right)\label{eq:ft} \\
    \pmb{\tau} &= \left[\begin{array}{l} \tau_x \\ \tau_y \\ \tau_z \end{array} \right] = \left[\begin{array}{l} K_{\omega}l\left(\omega_3^2 - \omega_1^2\right) \\ K_{\omega}l\left(\omega_4^2 - \omega_2^2 \right) \\ K_d\left( \omega_2^2 + \omega_4^2 - \omega_1^2 - \omega_3^2 \right)  \end{array}\right]\label{eq:tau}
\end{empheq}
\label{eq:uav_dynamics}
\end{subequations}
where $\hat{\pmb{k}}=\left[0 \ 0 \ 1 \right]^T$ is the unit vector along the $z-$axis,  $\pmb{x}\in\mathbb{R}^3$ is the position of the UAV with mass $m\in\mathbb{R}$ and moment of inertia $\pmb{J}\in\mathbb{R}^{3\times3}$, $\pmb{v}\in\mathbb{R}^3$ is the linear velocity, $\pmb{R}\in\mathbb{R}^{3\times3}$ is the rotation matrix that converts a vector from the UAV-fixed body frame to the inertial frame, $\pmb{\Omega}\in\mathbb{R}^3$ is the angular velocity of the UAV, $f_t\in\mathbb{R}$ is the input thrust vector and $\pmb{\tau}\in\mathbb{R}^3$ is the input torque vector given by Eq. (\ref{eq:ft}) and (\ref{eq:tau}) respectively. The quantities $\omega_i$ denotes the rotational velocity of the $i^{th}$ motor in (rad/s), $l$ denotes the arm length of the UAV, and the constants $K_\omega, K_d$ denote the motor constant and the drag coefficient respectively. The operator $(.)^{\times}:\mathbb{R}^3\rightarrow\mathbb{R}^{3\times3}$ is the hat operator that converts a vector to a skew-symmetric matrix. The quantities $\Tilde{\pmb{f}}, \Tilde{\pmb{\tau}}$ represent the external complex aerodynamic forces and torques on the UAV that are mostly unknown or cannot be analytically modelled. In real world, the dynamics given by Eq. (\ref{eq:uav_dynamics}) are rudimentary and cannot be relied upon for estimating the UAV states due to presence of noise, model uncertainties and external disturbances.

In general, one can use \textit{billings theorem} \citep{leontaritis1985input} to write the input-output model of a dynamical system as:
\begin{equation}
    \pmb{y}_{k+1} = f\left(\pmb{y}_k,\hdots,\pmb{y}_{k-n},\pmb{u}_k,\hdots, \pmb{u}_{k-n}\right)
\end{equation}
where $f(.)$ is an unknown nonlinear function, $n$ is the order of the system, $\pmb{y}_k$ and $\pmb{u}_k$ is the output of and input to the system respectively at time step $k$. Note that one can use a recurrent neural network to approximate the unknown nonlinear function using the current input and past output. It has been shown in the literature that the \textit{Memory Neuron Network} (MNN) \cite{sastry1994memory} is more efficient in approximating the dynamics accurately than other state-of-the-art recurrent neural networks \citep{9534209}. The presence of uncertainty in thrust and the unknown disturbance influences the stability/reliability of prediction. The next section proposes a spectrally normalized MNN to learn the UAV model accurately.

\subsection{Spectral-Normalized Memory Neural Network based Model Learning}

\begin{figure}
    \begin{tikzpicture}[scale=0.9, every node/.style={transform shape}]
    
        \draw [->] (-1.0, 4.25) -- (0.0, 4.25);
        \node at (-0.55, 4.5) {$\pmb{\omega}$};
        
        \draw [->] (-0.5, 4.25) -- (-0.5, 3);
        \filldraw[fill=black!20] (-0.25, 3.0) -- (-0.75, 3.0) -- (-0.5, 2.5) -- cycle;
        \node at (-0.9, 2.75) {$\frac{1}{\omega_m}$};
        \draw [->]  (-0.5, 2.5) -- (-0.5, 0.6) -- (1.0, 0.6);
        \node at (0.6, 0.8) {$\overline{\pmb{\omega}}$};
        
        \draw[densely dashed, ->] (3.75, 3.5) -- (3.75, 2.8) -- (0.3, 2.8) -- (0.3, 1.3) -- (1.0, 1.3);
        \node at (0.6, 1.5) {$\pmb{p}$};
        
        \draw (1.0,0) rectangle (3.0, 2);
        \node at (2.0, 1.6) {SN-MNN};
        \node at (1.9, .7) {
        \begin{tikzpicture}[x=1.5cm, y=1.5cm, >=stealth, scale=0.25, every node/.style={transform shape}, opacity=0.2]
        
        \foreach \m [count=\i] in {1, memory, missing, 4, memory}
            \node [every neuron/.try, neuron \m/.try](input-\m-\i) at (1.0, 2.5-\i/2) {};
        
        \foreach \m [count=\i] in {1, memory, 3, memory, missing, 6, memory}
        	\node [every neuron/.try, neuron \m/.try](hidden-\m-\i) at (2.4, 2.8-\i/2) {};
        
        \foreach \i in {1, 2}
        	\node [every neuron/.try](output-\i) at (3.8,2.5-\i) {};
        
        \foreach \i in {1, 2}
        	\node [neuron memory/.try](output-memory-\i) at (3.8, 2.0-\i) {};
        
        \foreach \i in {1, 2}
        	\draw[->] (output-\i) -- (output-memory-\i);
        	
        \foreach \i in {1, 4}
        	\foreach \j in {1, 3, 6}
        		\draw [->] (input-\i-\i) -- (hidden-\j-\j);

        \foreach \i in {1, 3, 6}
        	\foreach \j in {1, 2}
        		\draw [->] (hidden-\i-\i) -- (output-\j);
        		
	\end{tikzpicture}
	};
	
	\draw (0.0, 3.5) rectangle (2.5, 5);
	\node at (1.25, 4.7) {Real World};
	\node at (1.25, 4.3) {UAV-Dynamics};
	\node at (1.25, 3.9) {$\dot{\pmb{x}} = f(\pmb{x}, \pmb{\omega})$};
	
	\draw[->] (2.5, 4.25) -- (3, 4.25);
	
	\draw (3.0, 3.5) rectangle (4.5, 5);
	\node at (3.75, 4.5) {Sensor};
	\node at (3.75, 4.0) {$\pmb{y} = h(\pmb{x})$};
	\draw[->] (4.5, 4.25) -- (5., 4.25); 
    
    \node at (4.85, 4.4) {\tiny$+$};
    \node[circle, draw, minimum size=0.5cm] at (5.25, 4.25) {}; 
    \node at (5.1, 3.9) {-};
    
    \draw[->] (3.0, 1) -- (5.25, 1) -- (5.25, 4);
    \node at (3.2, 1.3) {$\hat{\pmb{y}}$};
    
    \draw[->] (5.5, 4.25) -- (6, 4.25);
    \node at (5.75, 4.4) {$\pmb{e}$};
    
    \end{tikzpicture}
    \hspace{2cm}
    \begin{tikzpicture}[x=1.5cm, y=1.5cm, >=stealth, scale=0.6, every node/.style={transform shape}]
    
        \node [every neuron](nn-demo) at (0.8, 4.2) {};
        \draw [<-] (nn-demo) -- ++(-0.5, 0);
        \draw [->] (nn-demo) -- ++(0.5, 0);
        \draw (0.65, 3.6) rectangle (0.95, 3.9) node[pos=.5]{$z^{\shortminus 1}$};
        \node [neuron memory](mnn-demo) at (0.8, 3.3) {};
        \draw [->] (mnn-demo) -- ++(0.5, 0);
        \draw (0.65, 2.75) rectangle (0.95, 3.05) node[pos=.5]{$z^{\shortminus 1}$};
        \draw [->] (nn-demo.east) .. controls +(left:-4mm) and +(right:4mm) .. (0.95, 3.75) node at (1.3, 3.85) {\scriptsize $\alpha_1^i$};
        \draw [->] (0.65, 3.75) .. controls +(left:4mm) and +(right:-4mm) .. (mnn-demo.west);
        \draw [->] (mnn-demo.east) .. controls +(left:-4mm) and +(right:4mm) .. (0.95, 2.9) node at (1.4, 2.95) {\scriptsize $1\shortminus\alpha_1^i$};
        \draw [->] (0.65, 2.9) .. controls +(left:4mm) and +(right:-4mm) .. (mnn-demo.west);
        
        \draw [dashed] (0.12, 2.7) -- (1.65, 2.7) -- (1.65, 4.5) -- (0.12, 4.5) -- cycle;
        \draw [dashed] (0.12, 2.7) -- (0.7, 2.3);
        \draw [dashed] (1.65, 2.7) -- (1.3,  2.4);
        
        \node [every neuron](nn-legend) at (3.0-0.5, 4.0) {};
        \node [neuron memory](mnn-legend) at (3.0-0.5, 3.4) {};
        \node at (4.2-0.5, 4.0) {Network neuron}; 
        \node at (4.2-0.5, 3.4) {Memory neuron};
        
        \draw (2.5-0.5, 4.4) -- (5.2-0.5, 4.4) -- (5.2-0.5, 3.0) -- (2.5-0.5, 3.0) -- cycle;
        
        
        \foreach \m [count=\i] in {1, memory, missing, 4, memory}
            \node [every neuron/.try, neuron \m/.try](input-\m-\i) at (1.0, 2.5-\i/2) {};
        
        \foreach \i\j in {1, 4}
        	\draw [<-] (input-\i-\i) -- ++(-0.5, 0) {};
        		
        \foreach \i\j in {1/2, 4/5}
        	\draw[->] (input-\i-\i) -- (input-memory-\j);
        
        \foreach \m [count=\i] in {1, memory, 3, memory, missing, 6, memory}
        	\node [every neuron/.try, neuron \m/.try](hidden-\m-\i) at (2.4, 2.8-\i/2) {};
        
        \foreach \i\j in {1/2, 3/4, 6/7}
        	\draw[->] (hidden-\i-\i) -- (hidden-memory-\j);
        
        
        \foreach \i in {1, 2}
        	\node [every neuron/.try](output-\i) at (3.8,2.5-\i) {};
        
        \foreach \i in {1, 2}
        	\node [neuron memory/.try](output-memory-\i) at (3.8, 2.0-\i) {};
        
        \foreach \i in {1, 2}
        	\draw[->] (output-\i) -- (output-memory-\i);
        	
        \foreach \i in {1, 4}
        	\foreach \j in {1, 3, 6}
        		\draw [->] (input-\i-\i) -- (hidden-\j-\j);
        
        \foreach \i in {2, 5}
        	\foreach \j in {1, 3, 6}
        		\draw [->] (input-memory-\i) -- (hidden-\j-\j);
        
        \foreach \i in {1, 3, 6}
        	\foreach \j in {1, 2}
        		\draw [->] (hidden-\i-\i) -- (output-\j);
        
        \foreach \i in {2, 4, 7}
        	\foreach \j in {1, 2}
        		\draw [->] (hidden-memory-\i) -- (output-\j);
        
        \foreach \i in {1, 2}
        	\draw [->] (output-memory-\i.west) .. controls +(left:6mm) and +(right:-6mm) .. (output-\i.west);
        	
        \foreach \l [count=\i] in {x, y}
        	\draw [->] (output-\i) -- ++(0.5, 0) {};
        
        \node at (1.6, 2.25) {\scriptsize $n_1^i$};
        \node at (1.6, 1.95) {\scriptsize $r_1^i$};
        \draw [->] (1.8, 2.15) .. controls +(left:1mm) and +(right:-3mm) .. (2.0, 2.7);
        \draw [->] (1.85, 1.98) .. controls +(left:1mm) and +(right:-2mm) .. (2.0, 2.4);
        
        \node at (2.17, 2.75) {\scriptsize $w_{11}^i$};
        \node at (2.06, 2.53) {\scriptsize $q_{11}^i$};
        
        \draw [->] (3.4, 0.09) .. controls +(left:2mm) and +(right:-3mm) .. (3.5, -0.4);
        \node at (3.6, -0.4) {\scriptsize $\beta_2^L$};
        
        \draw [dashed] (0.7, 0.8 + 0.5) -- (1.3, 0.8 + 0.5) -- (1.3, 1.8 + 0.5) -- (0.7, 1.8 + 0.5) -- cycle;
    \end{tikzpicture}

\caption{Figure on the left shows a schematic block diagram of SN-MNN training. The error is used for backpropagation. The motor input $\pmb{\omega}$ is normalized by the maximum rotor speed $\omega_m$ before feeding to the SN-MNN. The figure on the right shows the schematic diagram of a fully connected SN-MNN consisting of a single hidden layer.}
\label{fig:train_mnn}
\end{figure}
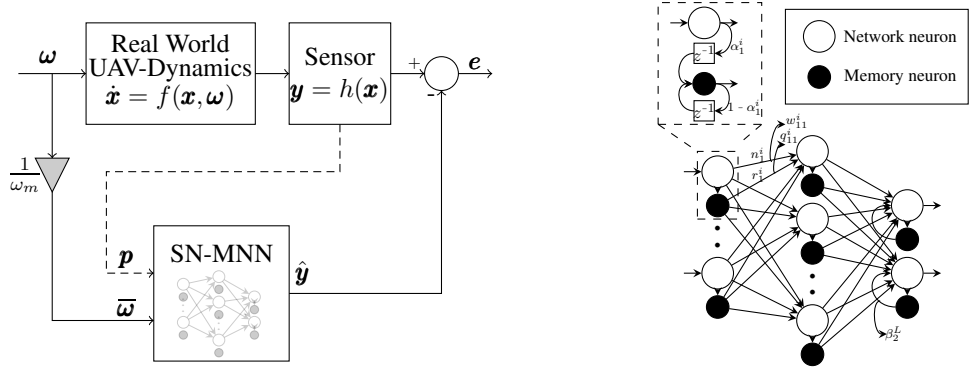

The Spectrally Normalized Memory Neural Network (SN-MNN) contains \textit{fully connected} network neurons (white circle) with its associated memory neurons (black solid circle). The unique nature of the connection between the Network Neurons and the Memory Neurons (see Fig.\ref{fig:train_mnn}) makes the network \textit{recurrent} in nature. The network is parameterized by $\pmb{\theta} = \left\{(^{1}\pmb{W}, ^{1}\pmb{Q}), \hdots, (^{L}\pmb{W}, ^{L}\pmb{Q})\right\}$, the weights corresponding to both the network neurons ($\pmb{W}$) as well as the memory neurons ($\pmb{Q}$). The left superscript denotes the layer number. Thus, the output of SN-MNN to an input $\pmb{p}$ can be compactly represented as:
\begin{align}
    f(\pmb{p}, \pmb{\theta}) = ^{L}\pmb{W}\left(\hdots \phi\left(^2\pmb{W}\left(\phi\left(^1\pmb{W}\pmb{p} + ^1\pmb{Q} ^{1}\pmb{r} \right)\right) + ^{2}\pmb{Q}^{2}\pmb{r}\right) \hdots\right) + ^{L}\pmb{Q}^{L}\pmb{r} \label{eq:MNN_output}
\end{align}
where $\phi(.)$ denotes the activation function and $^{l}\pmb{r}$ denotes the output of the memory neurons present in the $l^{th}$ layer. The recurrence relationship between the memory neurons and the network neurons can be represented as:
\begin{align}
    ^{l}\pmb{r}_{k} = ^{l}\pmb{\alpha} ^{l}\pmb{n}_{k-1} + (1 - ^{l}\pmb{\alpha})^{l}\pmb{r}_{k-1}
    \label{eq:v_and_n}
\end{align}
where $^{l}\pmb{n} = \phi(.)$ is the output of the activation function in the $l$\textsuperscript{th} layer and $^{l}\pmb{\alpha}$ is the weight of the feedback connections between the network and the memory neurons in the $l$\textsuperscript{th} layer (see Fig. \ref{fig:train_mnn}).

The Lipschitz constant $\gamma$ of a real valued function $f:\mathbb{R}^n\rightarrow\mathbb{R}$ is defined mathematically as:
\begin{align}
    \left\Vert f\left(\pmb{p}_2) - f(\pmb{p}_1\right)\right\Vert_2 \leq \gamma \left\Vert \pmb{p}_2 - \pmb{p}_1 \right\Vert_2
\end{align}
The Lipschitz constant of a differentiable function is the \textit{maximum spectral norm} of its Jacobian over the function's domain: $\gamma = \text{sup}_{\pmb{p}} \ \rho(\nabla f(\pmb{p}))$, where $\rho(\pmb{A})$ denotes the spectral norm of the matrix $\pmb{A}$ which is defined as the square root of maximum eigenvalue of the matrix $A^{H}A$. As demonstrated in \citep{shi2019neural}, it is essential to limit the Lipschitz constant of a neural network to ensure stable reliable prediction performance that is comparable with the actual dynamics of the UAV. The following Theorem guarantees the Lipschitz constant of the SN-MNN:
\begin{theorem}
The Lipschitz constant of the entire spectrally normalized memory neuron network satisfies the inequality $\left\Vert f(\pmb{p}, \pmb{\theta}) \right\Vert_2 \leq \gamma$ under the spectral weight normalization:
\begin{align}
    \overline{\pmb{W}} = \left(\frac{\pmb{W}}{\rho(\pmb{W})}\right) \cdot \gamma ^ {\frac{1}{L}}, \ \ \ \ \ \ \ \overline{\pmb{Q}} = \left(\frac{\pmb{Q}}{\rho(\pmb{Q})}\right) \cdot \gamma ^ {\frac{1}{L}}\label{eq:SWN}
\end{align}
with $\gamma$ being the intended Lipschitz constant of the network, and $tanh(.)$ as the activation function.
\end{theorem}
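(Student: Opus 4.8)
The plan is to bound the spectral norm of the Jacobian $\nabla_{\pmb{p}} f$ uniformly in $\pmb{p}$ and then invoke the identity $\gamma_f=\sup_{\pmb{p}}\rho(\nabla f(\pmb{p}))$ stated just before the theorem. First I would record the three elementary facts that do all the work: (i) the spectral norm is submultiplicative, $\rho(\pmb{A}\pmb{B})\le\rho(\pmb{A})\rho(\pmb{B})$, and absolutely homogeneous, $\rho(c\pmb{A})=|c|\,\rho(\pmb{A})$; (ii) $\tanh$ is $1$-Lipschitz since $|\tanh'(x)|=|1-\tanh^2 x|\le 1$, so every diagonal matrix $\pmb{D}=\mathrm{diag}(\tanh'(\cdot))$ satisfies $\rho(\pmb{D})\le 1$; and (iii) by the normalization rule (\ref{eq:SWN}), each normalized weight obeys $\rho\big(\,{}^{l}\overline{\pmb{W}}\,\big)=\dfrac{\gamma^{1/L}}{\rho({}^{l}\pmb{W})}\,\rho({}^{l}\pmb{W})=\gamma^{1/L}$, and identically $\rho\big(\,{}^{l}\overline{\pmb{Q}}\,\big)=\gamma^{1/L}$.

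The one structural point that needs care is how the memory neurons enter. By the recurrence (\ref{eq:v_and_n}), ${}^{l}\pmb{r}_{k}$ is an exponential moving average of ${}^{l}\pmb{n}_{k-1},{}^{l}\pmb{n}_{k-2},\dots$, all of which are functions of inputs and hidden activations at time steps $\le k-1$; hence, reading $f(\cdot,\pmb{\theta})$ as the one-step-ahead map $\pmb{p}=\pmb{p}_k\mapsto f(\pmb{p},\pmb{\theta})$, every additive term ${}^{l}\overline{\pmb{Q}}\,{}^{l}\pmb{r}$ in (\ref{eq:MNN_output}) is a constant and therefore disappears under differentiation with respect to $\pmb{p}$. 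Applying the chain rule to (\ref{eq:MNN_output}) with $\tanh$ as activation then gives
\[
\nabla_{\pmb{p}} f \;=\; {}^{L}\overline{\pmb{W}}\,\pmb{D}_{L-1}\,{}^{L-1}\overline{\pmb{W}}\,\pmb{D}_{L-2}\cdots \pmb{D}_{1}\,{}^{1}\overline{\pmb{W}},
\]
where $\pmb{D}_i=\mathrm{diag}\big(\tanh'(\cdot)\big)$ is evaluated at the $i$-th pre-activation.

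Chaining the estimates from fact (i) then finishes the argument:
\[
\rho(\nabla_{\pmb{p}} f)\;\le\;\Big(\textstyle\prod_{l=1}^{L}\rho({}^{l}\overline{\pmb{W}})\Big)\Big(\textstyle\prod_{i=1}^{L-1}\rho(\pmb{D}_i)\Big)\;\le\;\big(\gamma^{1/L}\big)^{L}\cdot 1\;=\;\gamma .
\]
Since $\rho(\pmb{D}_i)\le 1$ holds regardless of the pre-activation values --- equivalently, regardless of which memory state $\{{}^{l}\pmb{r}\}$ is realized --- this bound is uniform in $\pmb{p}$ and in the network's past trajectory, so $\gamma_f=\sup_{\pmb{p}}\rho(\nabla f(\pmb{p}))\le\gamma$, i.e. $\Vert f(\pmb{p}_2,\pmb{\theta})-f(\pmb{p}_1,\pmb{\theta})\Vert_2\le\gamma\Vert\pmb{p}_2-\pmb{p}_1\Vert_2$, which is the assertion. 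Equivalently, one can phrase this as an induction on layers: the $l$-th layer map is the composition of the $\gamma^{1/L}$-Lipschitz linear map ${}^{l}\overline{\pmb{W}}$, the $1$-Lipschitz $\tanh$, and the constant shift ${}^{l}\overline{\pmb{Q}}\,{}^{l}\pmb{r}$, hence is $\gamma^{1/L}$-Lipschitz, and the $L$ layers compose to $\gamma$.

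I expect the treatment of the recurrent connections to be the only genuinely delicate step: one must justify reading ${}^{l}\pmb{r}$ as state rather than as an independent input, since otherwise each layer would appear to contribute an extra $\gamma^{1/L}$ factor from ${}^{l}\overline{\pmb{Q}}$ and the product would overshoot $\gamma$. Everything after that is the standard telescoping of spectral norms used for feedforward spectral normalization, together with the observation that the activation-derivative factors are bounded by $1$ uniformly, which is exactly what makes the bound independent of the network's history.
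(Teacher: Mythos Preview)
Your proposal is correct and follows essentially the same approach as the paper: the paper bounds $\mathrm{Lip}(f)$ by invoking $\mathrm{Lip}(g_1\circ g_2)\le\mathrm{Lip}(g_1)\mathrm{Lip}(g_2)$, $\mathrm{Lip}(\tanh)=1$, and $\rho({}^{l}\overline{\pmb{W}})=\gamma^{1/L}$, and makes the identical observation that each ${}^{l}\overline{\pmb{Q}}\,{}^{l}\pmb{r}$ is a ``time-varying bias'' independent of $\pmb{p}_k$. Your explicit Jacobian factorization is just the differential version of that composition inequality, and your layer-by-layer inductive phrasing is exactly the paper's argument; you are simply more careful in spelling out why the memory terms drop out.
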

\begin{proof}
The spectral norm of of a linear map $g(\pmb{p}) = \pmb{Wp} + \pmb{b}$ can be simplified as: $\text{sup}_{\pmb{p}} \rho(\nabla g) = \text{sup}_{\pmb{p}} \rho(\pmb{W}) = \rho(\pmb{W})$. Moreover, using the inequality $\text{Lip}\left( g_{1} \circ g_{2}\right) \leq \text{Lip}\left(g_1\right) \cdot \text{Lip}\left( g_2\right)$ and the fact that $\text{Lip}\left(tanh(.)\right) = 1$ along with Eq. (\ref{eq:MNN_output}) leads to:

\begin{align}
    \left\Vert f\left(\pmb{p}, \pmb{\theta} \right) \right\Vert_2 &= \text{Lip}\left( ^{L}\overline{\pmb{W}}\cdot\left(\hdots \phi\left(^2\overline{\pmb{W}}\cdot\left(\phi\left(^1\overline{\pmb{W}}\pmb{p} + ^1\overline{\pmb{Q}} ^{1}\pmb{r} \right)\right) + ^{2}\overline{\pmb{Q}}^{2}\pmb{r}\right) \hdots\right) + ^{L}\overline{\pmb{Q}}^{L}\pmb{r} \right) \\
    & \leq \prod_{l=1}^{L} \rho(\overline{\pmb{W}}) = \prod_{l=1}^{L} \gamma^{\frac{1}{L}} = \gamma
\end{align}
Here, the term $^{i}\pmb{\overline{\pmb{Q}}}^{i}\pmb{r}$ can be considered as a "time-varying" bias that is independent of input $\pmb{p}_k$ at current time step $k$, and thus, doesn't affect the Lipschitz constant. However, the term $\pmb{r}$ depends on $\pmb{p}_{k-1}$ through Eq. \ref{eq:v_and_n} which is why the weight matrix $\pmb{Q}$ corresponding to the memory neurons must also undergo spectral normalization.
\end{proof}

For training the network, the \textit{modified backpropagation} approach as described in \citep{sastry1994memory} is used, with the following cost function being minimized during training at every time step:
\begin{subequations}
\begin{align}
    \pmb{\theta}^{*} &= \argmin_{\pmb{\theta}} \sum_{k=1}^{N} \frac{1}{T} \left\Vert \pmb{y}_{k} - f(\pmb{p}_k, \pmb{\theta}) \right\Vert_{2}^{2} \\
    \text{such that,} & \ \ \ \ \ \Vert f(\pmb{p}_{k}, \pmb{\theta}) \Vert_2 \leq \gamma\label{eq:gamma}
\end{align}
\end{subequations}
where the input $\pmb{p}_k = \left[\pmb{y}_{k-1}^T \ \ \overline{\pmb{\omega}}_k^T \ \ \pmb{\Theta}_{k}^T \right]^T$ consists of the previous position of the UAV $\pmb{y}_{k-1}\in\mathbb{R}^3$, the roll-pitch-yaw orientation of the UAV $\pmb{\Theta}_k\in\mathbb{R}^3$ and the current normalized motor RPM $\pmb{\overline{\omega}}_k\in\mathbb{R}^4$. The training process of the network is illustrated in Fig. \ref{fig:train_mnn}. Let $\pmb{e}$ be the error vector. Due to the constraint of Eq. (\ref{eq:gamma}), the update rules for $\pmb{W}$ and $\pmb{Q}$ are as follows:
\begin{empheq}[box=\widefbox]{align}
    ^l\pmb{W}_{k+1} &= \frac{\gamma^{\frac{1}{L}}}{\rho(^l\pmb{W}_k)}\left( ^l\pmb{W}_k - \eta \cdot ^l\pmb{n}_k\pmb{e}^T \right) \\
    ^l\pmb{Q}_{k+1} &= \frac{\gamma^{\frac{1}{L}}}{\rho(^l\pmb{Q}_k)}\left( ^l\pmb{Q}_k - \eta \cdot ^l\pmb{r}_k\pmb{e}^T \right)
\end{empheq}
More details of the update rule derivation are provided in the supplementary material \cite{nish2022supp_mat}.

\subsection{GPS Conversion and State Estimation}

The network predicts the position of the UAV based on the rotor RPM input. This position estimate can be used during \textit{state fusion} typically performed by an onboard Extended Kalman Filter (EKF), in addition to the state information provided by other peripheral sensors like IMU, compass, magnetometer, airflow sensor etc. In this work, the position estimates given by the network are converted to \textit{GPS coordinates} $\pmb{\zeta}_k$, also known as geodetic coordinates (latitude $\phi$, longitude $\lambda$, altitude $\mathfrak{z}$) and transformed GPS coordinates are used by the EKF for state estimation. This is illustrated in Fig. \ref{fig:GPS_Est} and Fig. \ref{fig:block_diag}. Since many flight controllers offer out-of-box support for real-time GPS fusion, it has been adopted in this paper. Moreover, ground control software utilises GPS coordinates to visualise the UAV path and monitor its itinerary and the course of navigation.

\begin{figure}
\begin{tikzpicture}[scale=0.9, every node/.style={transform shape}]
\begin{axis}[
  view={150}{30},
  axis lines=center,
  width=6cm,height=5cm,
  xticklabels={,,}, yticklabels={,,}, zticklabels={,,},
  xmin=-5,xmax=5,ymin=-5,ymax=5,zmin=0,zmax=4,
  clip=false,
]
\addplot3 [only marks] coordinates {(2,2,2)};
\addplot3 [no marks,densely dashed] coordinates {(0,0,2) (2,0,2) (2,0,0)};
\addplot3 [no marks,densely dashed] coordinates {(0,0,2) (0,2,2) (2,2,2)};
\addplot3 [no marks,densely dashed] coordinates {(2,0,2) (2,2,2) (2, 2, 0)};
\addplot3 [no marks,densely dashed] coordinates {(2,0,0) (2,2,0) (0,2,0)};
\addplot3 [no marks,densely dashed] coordinates {(0,2,0) (0,2,2)};

\node [above right] at (axis cs: 5, 0, 0) {$x$};
\node [above right] at (axis cs: 0, 5, 0) {$y$};
\node [right] at (axis cs: 0, 0, 4) {$z$};
\end{axis}

\draw[-stealth] (7, 1.5, 0) -- (7.5, 1.5, 0);
\draw[-stealth] (7, 1.5, 0) -- (7, 2, 0);
\draw[-stealth] (7, 1.5, 0) -- (7, 1.5, 0.8);

\draw[fill=black] (6.4, 1.8) circle (1.5pt);
\node at (5.6, 2.2) {$\left( \varphi',\lambda' ,\mathfrak{z}' \right)$};

\node at (1.4, 1.9) {$\left( x', y', z'\right)$};

\shade[ball color = lightgray,
    opacity = 0.6
] (7,1.5,0) circle (1cm);

\draw[->, thick] (3.3, 2.7) .. controls (4.5, 3.2) .. (5.5, 2.7);

\end{tikzpicture}
\hspace{2cm}
\includegraphics[scale=0.13]{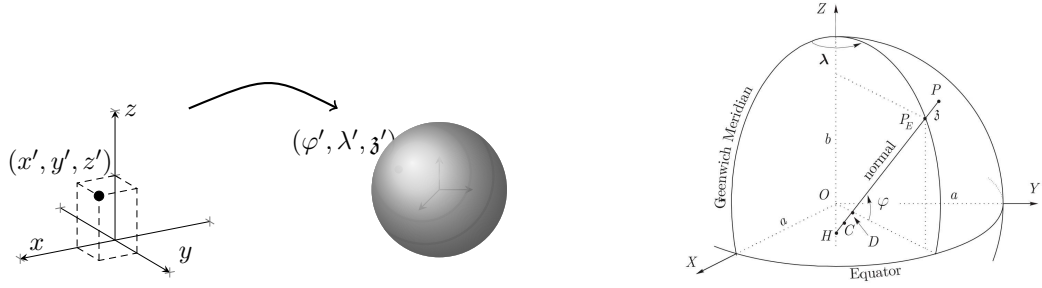}

\caption{Figure on the left shows the GPS Conversion from East-North-Up (ENU) coordinate vector $\pmb{x} = (x', y', z')$ shown in to geodetic (latitude, longitude and altitude) coordinate vector $\pmb{\zeta} = (\varphi', \lambda', \mathfrak{z}')$. Figure on the right shows the ECEF coordinate system and the geodetic coordinate system together.}
\label{fig:GPS_Est}
\end{figure}

\begin{figure}[!htb]
    \centering
    \hspace{-2.5cm}
    \begin{tikzpicture}[scale=0.75, every node/.style={transform shape}]
        
        
        \draw (-2.8, 3.75) rectangle (-1.0, 4.75);
        \node at (-1.9, 4.5) {PWM to};
        \node at (-1.9, 4.1) {RPM (ESC)};
        
        \draw[->] (-3.5, 4.25) -- (-2.8, 4.25);
        \node at (-3.6, 4.5) {RC Input};
        
        \draw [->] (-1.0, 4.25) -- (0.0, 4.25);
        \node at (-0.55, 4.5) {$\pmb{\omega}$};
        
        \draw [->] (-0.5, 4.25) -- (-0.5, 3);
        \filldraw[fill=black!20] (-0.25, 3.0) -- (-0.75, 3.0) -- (-0.5, 2.5) -- cycle;
        \node at (-0.9, 2.75) {$\frac{1}{\omega_m}$};
        \draw [->]  (-0.5, 2.5) -- (-0.5, 0.6) -- (1.0, 0.6);
        \node at (0.6, 0.8) {$\overline{\pmb{\omega}}$};
        
        \draw[densely dashed, ->] (3.75, 3.5) -- (3.75, 2.8) -- (0.3, 2.8) -- (0.3, 1.3) -- (1.0, 1.3);
        \node at (0.6, 1.5) {$\pmb{p}$};
        
        \draw (1.0,0) rectangle (3.0, 2);
        \node at (2.0, 1.6) {SN-MNN};
        \node at (1.9, .7) {
        \begin{tikzpicture}[x=1.5cm, y=1.5cm, >=stealth, scale=0.25, every node/.style={transform shape}, opacity=0.2]
        
        \foreach \m [count=\i] in {1, memory, missing, 4, memory}
            \node [every neuron/.try, neuron \m/.try](input-\m-\i) at (1.0, 2.5-\i/2) {};
        
        \foreach \m [count=\i] in {1, memory, 3, memory, missing, 6, memory}
        	\node [every neuron/.try, neuron \m/.try](hidden-\m-\i) at (2.4, 2.8-\i/2) {};
        
        \foreach \i in {1, 2}
        	\node [every neuron/.try](output-\i) at (3.8,2.5-\i) {};
        
        \foreach \i in {1, 2}
        	\node [neuron memory/.try](output-memory-\i) at (3.8, 2.0-\i) {};
        
        \foreach \i in {1, 2}
        	\draw[->] (output-\i) -- (output-memory-\i);
        	
        \foreach \i in {1, 4}
        	\foreach \j in {1, 3, 6}
        		\draw [->] (input-\i-\i) -- (hidden-\j-\j);

        \foreach \i in {1, 3, 6}
        	\foreach \j in {1, 2}
        		\draw [->] (hidden-\i-\i) -- (output-\j);
        		
	\end{tikzpicture}
	};
	
	\draw (0.0, 3.5) rectangle (2.5, 5);
	\node at (1.25, 4.7) {Real World};
	\node at (1.25, 4.3) {UAV-Dynamics};
	\node at (1.25, 3.9) {$\dot{\pmb{x}} = f(\pmb{x}, \pmb{\omega})$};
	
	\draw[->] (2.5, 4.25) -- (3, 4.25);
	
	\node at (4.7, 5.3) {Local Odometry System};
	\draw (3.0, 3.5) rectangle (6.5, 5);
	\node at (3.75, 4.5) {Sensor};
	\node at (3.75, 4.2) {Suite};
    
    \draw[dashed] (4.75, 4) rectangle (5.75, 4.5);
    \node at (5.25, 4.25) {EKF};
    
    \draw[->] (3.0, 1) -- (5.25, 1) -- (5.25, 1.5);
    \node at (3.2, 1.3) {$\hat{\pmb{y}}$};
    
    \draw (4.5,1.5) rectangle (6.0, 2.5);
    \draw[->] (5.25, 2.5) -- (5.25, 4);
    \node at (5.25, 2.2) {ENU to};
    \node at (5.25, 1.8) {Geodetic};
    \node at (5.5, 2.8) {$\pmb{\zeta}$};
    
    \draw[->] (5.75, 4.25) -- (7.0, 4.25);
    \node at (7.2, 4.25) {$\hat{\pmb{x}}$};
    
    \end{tikzpicture}
    \caption{Figure illustrates the state fusion replay process. Based on the RPM input, the orientation of the UAV, and the previous position of the UAV, the trained SN-MNN predicts the position of the UAV, which is provided to the onboard EKF as GPS coordinates. The Sensor suite consists of peripheral sensors like IMU, compass, magnetometer etc., and the state fusion process is performed by the EKF.} \label{fig:block_diag}
\end{figure}
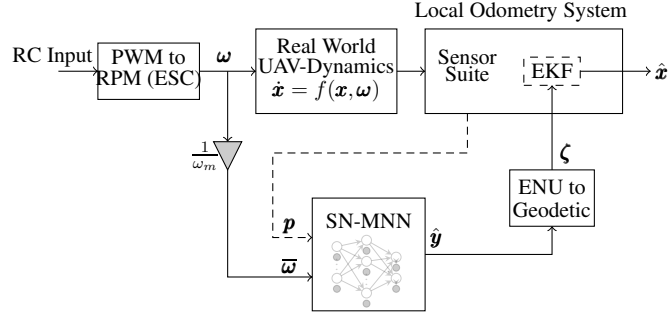
\begin{figure}[!htb]
        \includegraphics[scale=1, height=5.3cm]{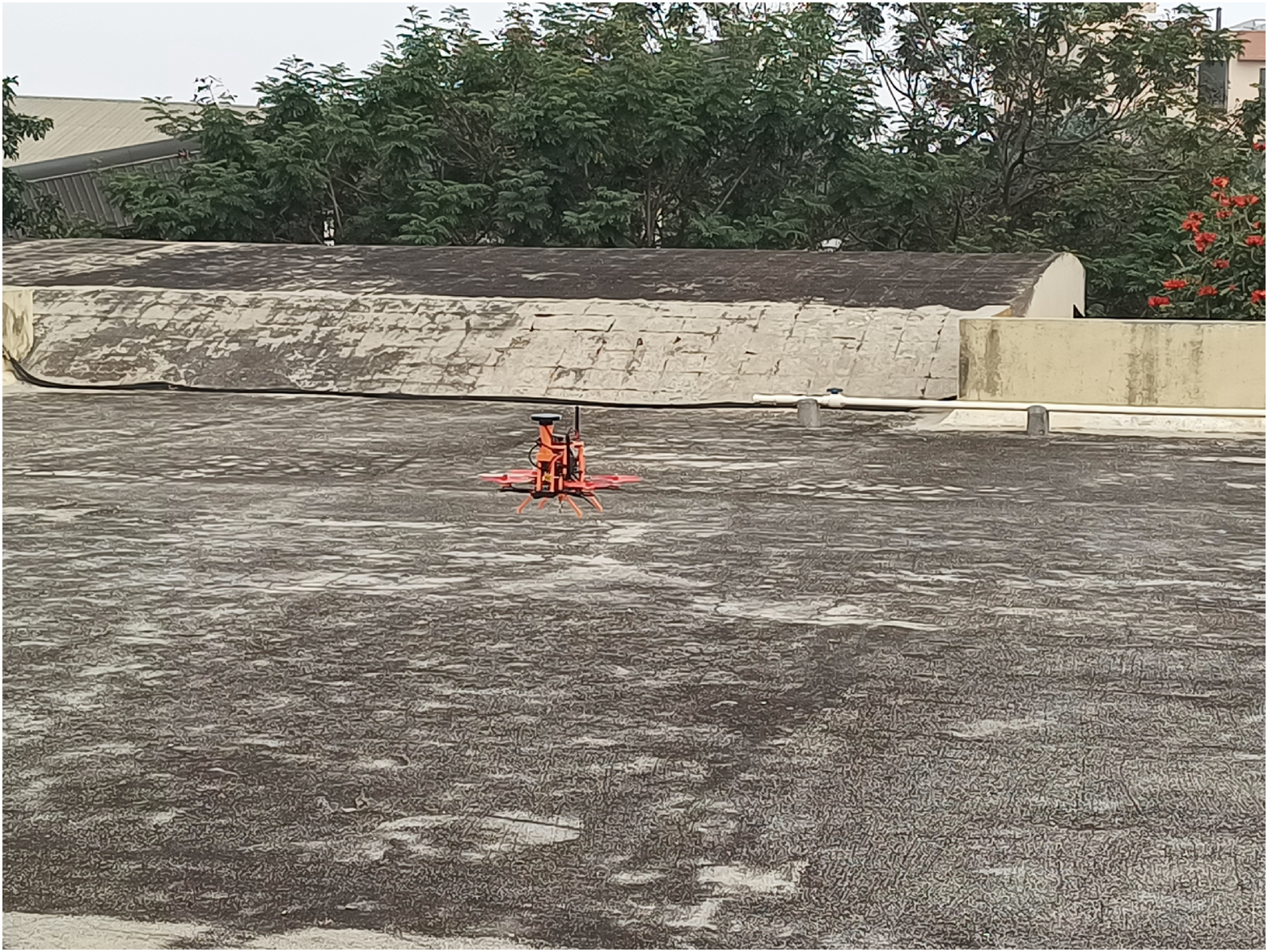}
        \hspace{0.3cm}
        \includegraphics[scale=1, height=5.3cm]{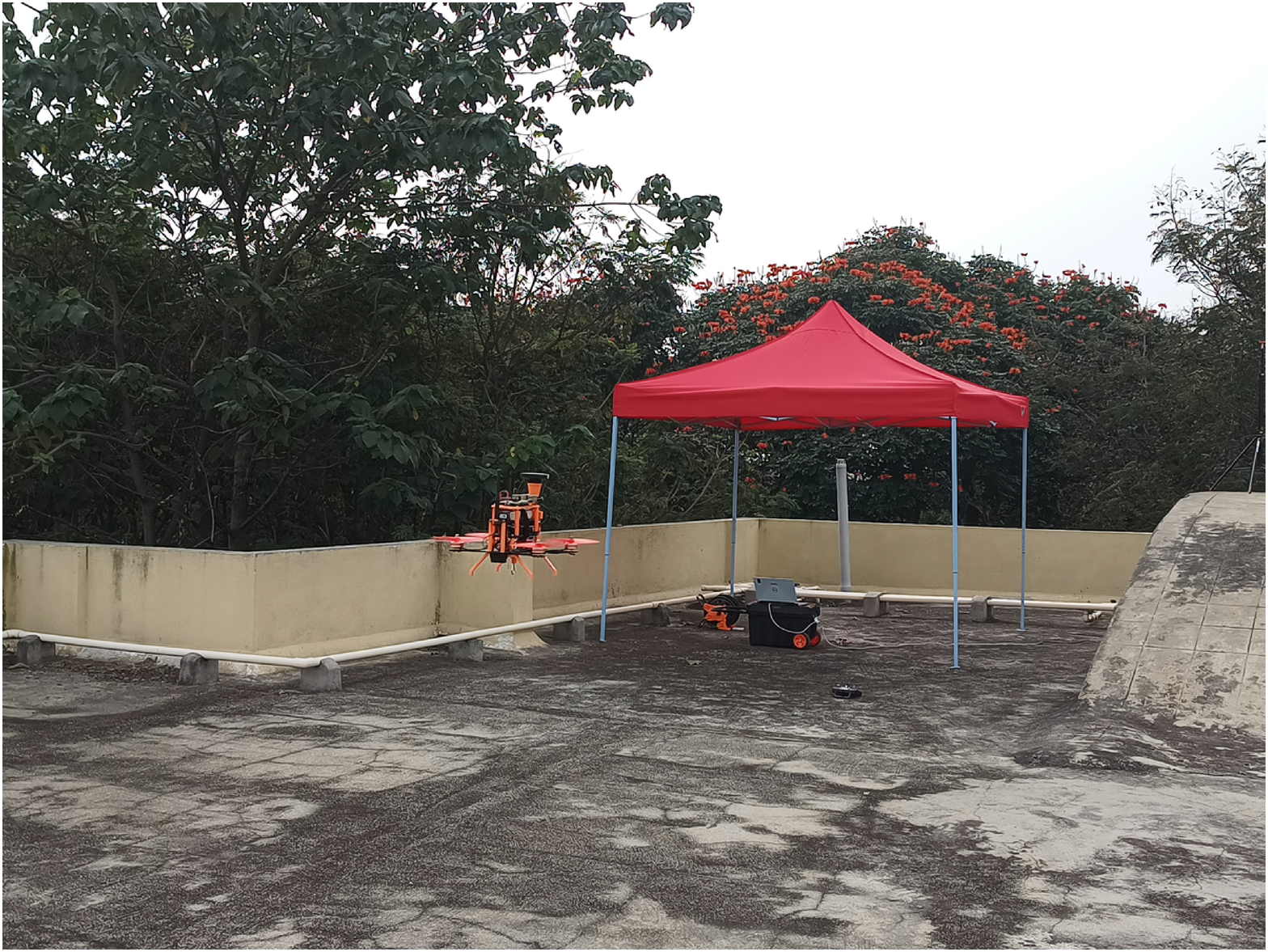}
    \caption{Figure on the left shows a snapshot of the UAV flight while collecting data. Figure on the right shows a snapshot of the RTK-GPS facility.}
    \label{fig:MOCAP_UAV}
\end{figure}
The ENU coordinates must first be converted to the ECEF (Earth-Center, Earth-Fixed) coordinate system before converting it to the GPS coordinates. Detailed information on converting the ENU coordinates to the GPS coordinates (via ECEF coordinates) is provided in the supplementary material \cite{nish2022supp_mat}. However, the following equations summarize the conversion to GPS coordinates:
\begin{equation}
\resizebox{.97\hsize}{!}{$\boxed{
    \varphi = 2tan^{-1}\left( \frac{Z}{D + \sqrt{D^2 + Z^2}} \right); \
    \lambda = 2tan^{-1}\left( \frac{Y}{X + \sqrt{X^2 + Y^2}} \right); \
    \mathfrak{z} = \frac{\kappa + e^2 - 1}{\kappa}\left( \sqrt{D^2 + Z^2} \right)
}$}\label{eq:ecef_geo}
\end{equation}
where the parameters $\kappa$ and $D$ can be calculated by the set of equations given in the supplementary material \cite{nish2022supp_mat}. This process is briefly summarized in Fig. \ref{fig:block_diag}.

\section{Experimental Results}
\label{sec:section_3}
First, this section presents the RTK GPS setup and the micro-UAV hardware configuration. Next, the experimental flight data collection for training the SN-MNN is discussed. Finally, the performance of SN-MNN prediction, flight evaluation and comparative study results are presented. 

\subsection{RTK GPS setup and UAV hardware}
The experimental setup consists of an outdoor RTK GPS facility (\texttt{Here+} RTK Base with \texttt{Here3} RTK GPS) with base station survey-in accuracy of 1m (relative accuracy of 10cm) and a custom-built micro-UAV (generic 250 racer frame) with the \texttt{Pixhawk 4} Flight Controller that runs on \texttt{PX4} firmware. The RTK facility and a micro-UAV operating in the facility are shown in Fig. \ref{fig:MOCAP_UAV}. The drone weighs about 1.1 Kg, with a RaspberryPi 4 onboard computer. The onboard sensors include an accelerometer, gyroscope, barometer and a compass, all present inside the flight controller. In addition to this, the Velox V2 1950KV T-motors are used with a 5-inch 3 blade propeller configuration. The data is collected on a typical day with mild steady wind influences.

\subsection{Experimental Flight Data Collection and Processing}
For training the network, experimental data is collected from the test facility. The states of the UAV and the $4$ rotor rpm are logged for multiple flights. The \texttt{DShot} protocol is used by the \texttt{F55A} electronic speed controller (ESC), which measures the rotor RPM values based on the back-EMF from the motor. The flight data mainly consists of random trajectories performed manually and certain square and circular trajectories performed autonomously by the UAV. These flights ensure that all possible UAV configuration in its state-space are captured.

Next, a common \textit{sampling frequency} is chosen to sample the data corresponding to different sensors (IMU, Barometer, Compass, RPM data and the RTK GPS position information). The RPM values are normalized based on the motor's maximum RPM. The final data consists of $12$ columns: one common timestamp for all other columns, $4$ normalized rotor rpm (from ESC), position, and quaternion orientation of the UAV. The entire data is then split into training and testing data in the ratio of 3:2.

\subsection{SN-MNN Prediction Performance Evaluation}

\begin{figure}
    \centering
    \includegraphics[scale=0.4]{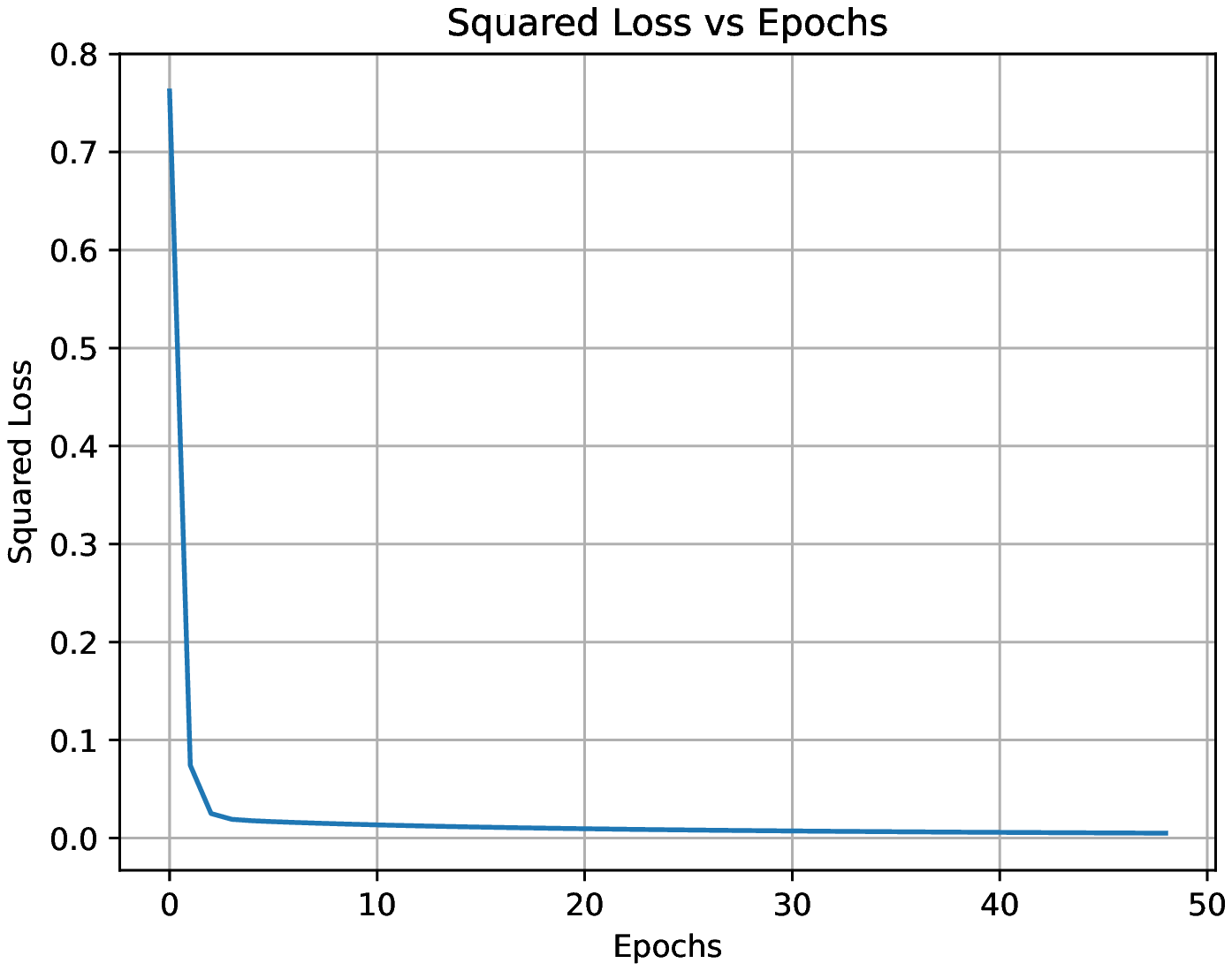}
    \hspace{0.5cm}
    \includegraphics[scale=0.4]{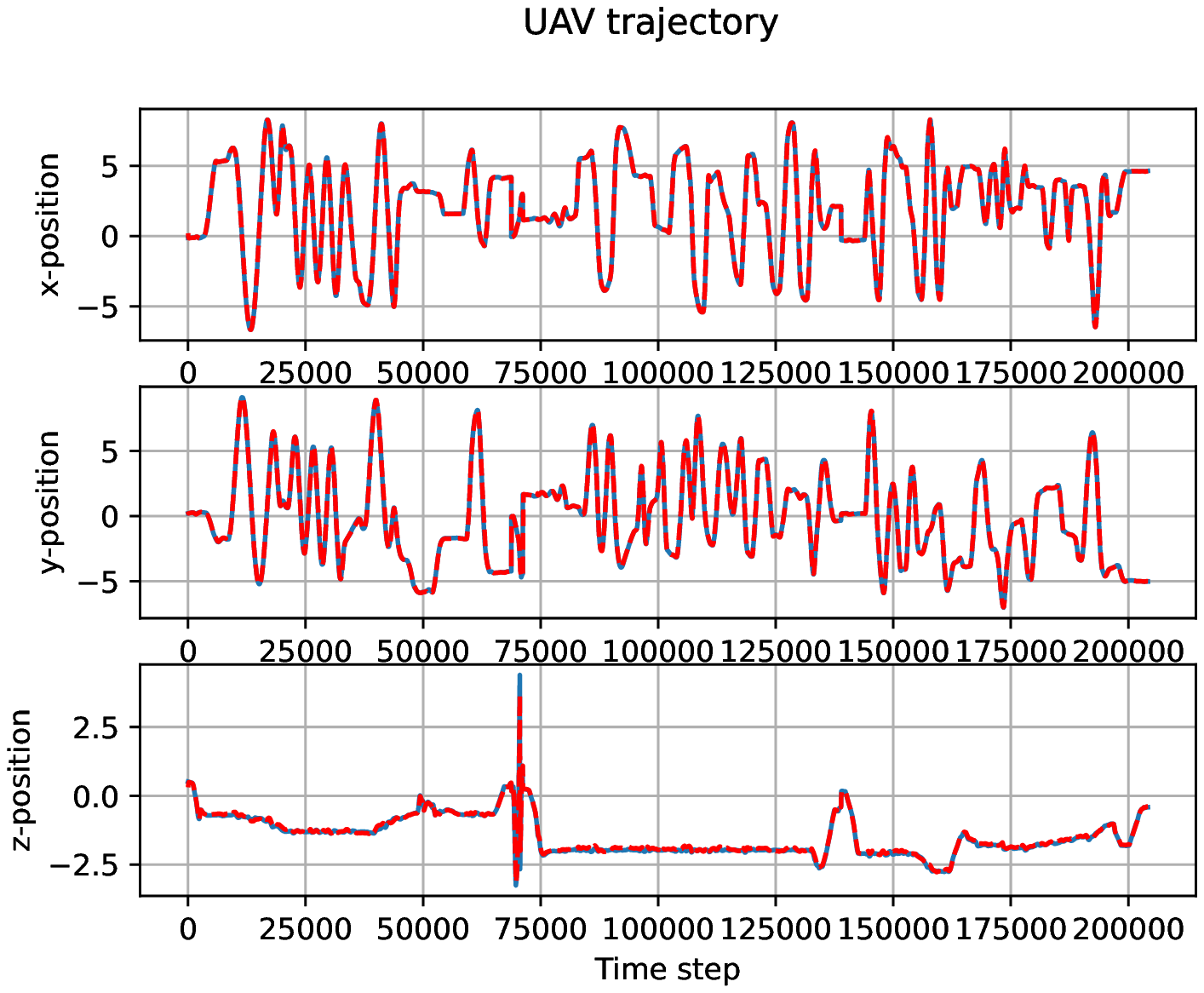}

    \includegraphics[scale=0.4]{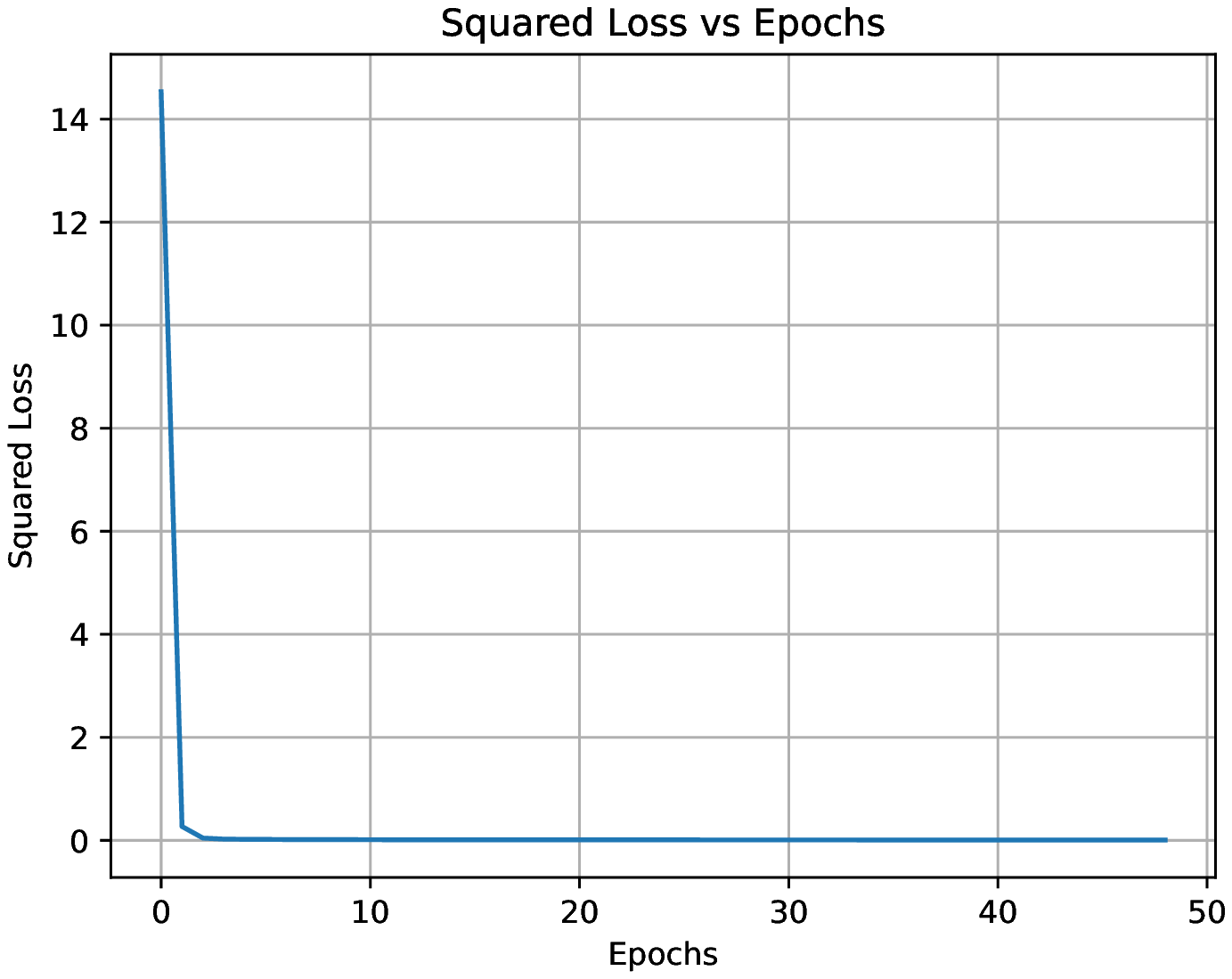}
    \hspace{0.5cm}
    \includegraphics[scale=0.4]{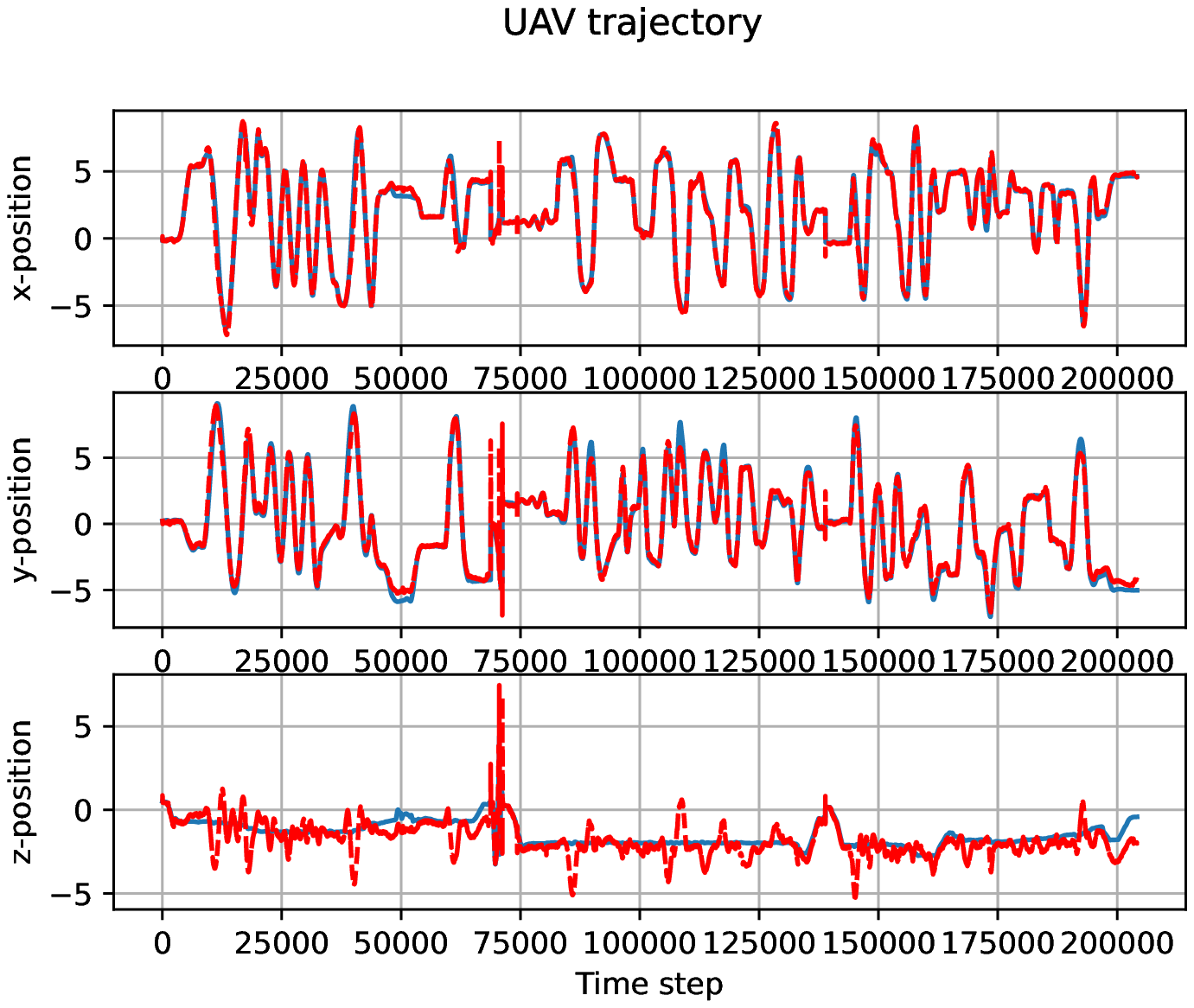}
\caption{Figure on the left shows the variation of the training loss vs epoch number. Figure on the right shows the prediction of the SN-MNN for the entire testing data set. The figures in the second row shows the performance of the network without spectral normalization.}
\label{fig:mnn_loss_pred}
\end{figure}

The SN-MNN is trained to predict the one-step-ahead position of the UAV, with the current UAV position $\pmb{y}_k\in\mathbb{R}^3$, orientation (Euler angles) $\pmb{q}_k\in\mathbb{R}^4$ and the normalized rotor RPM $\overline{\pmb{\omega}}_k\in\mathbb{R}^4$ as the input to the network. Hence, the selected architecture of SN-MNN is: $11$ input neurons, $100$ hidden neurons and $3$ output neurons. The network neurons in the hidden layer uses $tanh(.)$ activation and the network neurons in output layer employ linear activation function. The network is implemented in \texttt{Python} using \texttt{Numpy} Library. The network is trained for a total of $50$ epochs. The variation of the squared loss during training is shown in Fig. \ref{fig:mnn_loss_pred} for with and without spectral normalization. 

The output of the network for the entire test data (combined into one) is shown in Fig. \ref{fig:mnn_loss_pred}. The RMSE for the entire position prediction is about $5.84cm$, whereas, for the network without spectral normalization, the RMSE error is about $50cm$, thus justifying the requirement for a spectrally normalized network. It can be seen from Fig. \ref{fig:mnn_loss_pred} that the network has successfully learned the UAV dynamics accurately. The value of the Lipschitz constant $\gamma$ (here $\gamma=1$) of the network plays an important role in the stabilization of the network during the training process. It also determines "how fast" the network output can vary: For e.g. if the UAV is mostly hovering and making slow movements, the Lipschitz constant can be set to a low value. If the UAV is performing aggressive sharp maneuvers frequently, then the Lipschitz constant of the network must be set to a high value. In this paper, the trajectory data is collected for a UAV that mostly cruises and exhibits slow maneuvers.

\subsection{State Fusion and Experimental data evaluation}
Once the predicted position $\pmb{y}_k$ is obtained from the SN-MNN, the GPS geodetic vector $\pmb{\zeta}_k$ is calculated from Eq. \ref{eq:ecef_geo}, which is then given to the EKF for state fusion along with other state information like orientation, linear velocities (IMU) and UAV heading (compass). This is illustrated in Fig. \ref{fig:block_diag}. An instance of the \texttt{PX4-ECL} library is started on the onboard Raspberry-pi computer, and the state estimation process using the position predicted by the SN-MNN starts parallelly with the state-estimation process on the \texttt{PX4}-Autopilot that uses the GPS for the UAV flight. This is done to facilitate a real-time comparison of the position obtained (after state fusion) from the proposed algorithm vs. the position obtained (after state fusion) using an RTK GPS.


\begin{figure}[!htb]
    \centering
    \includegraphics[scale=0.4]{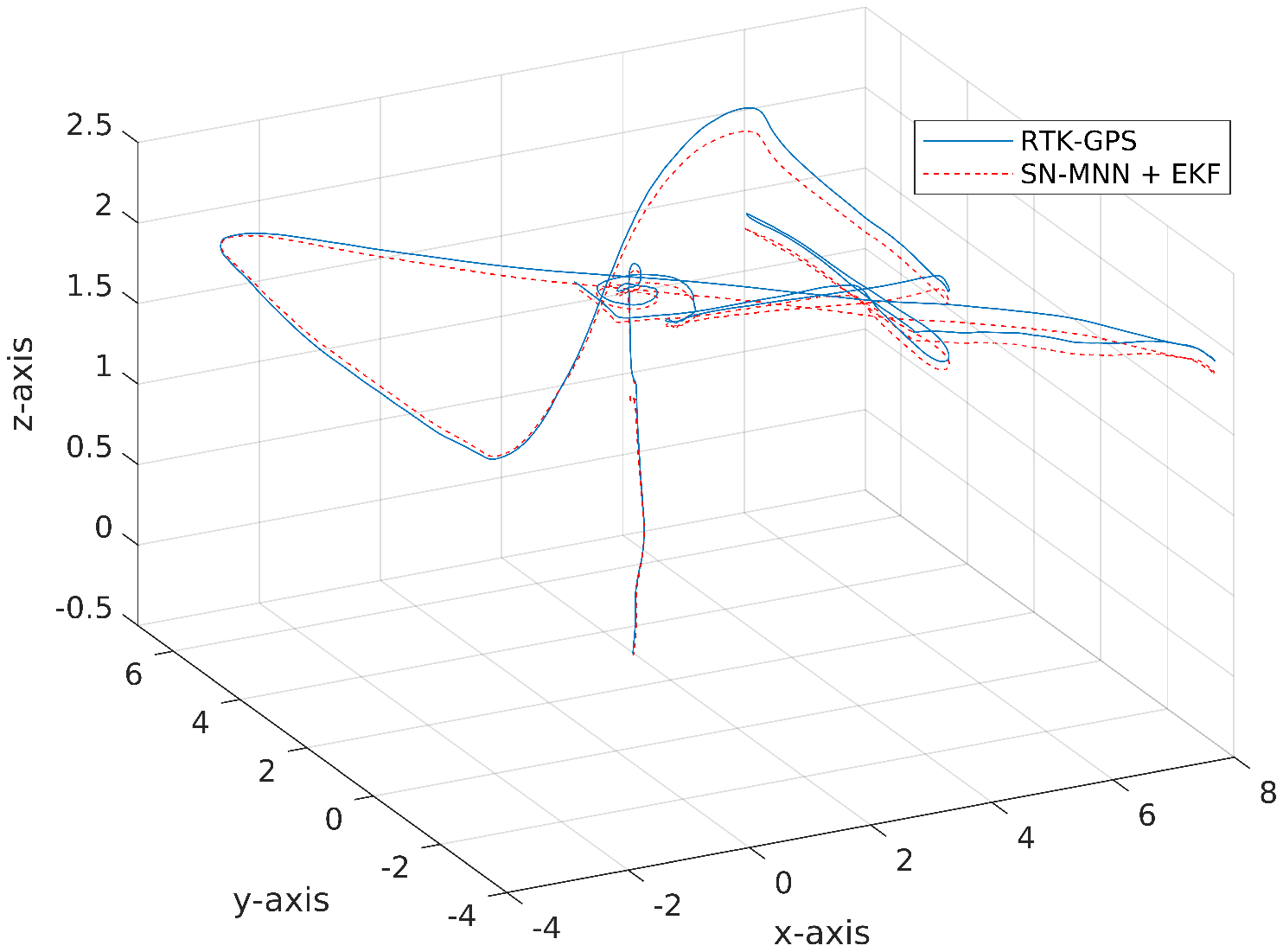}
    \hspace{0.85cm}
    \includegraphics[scale=0.4]{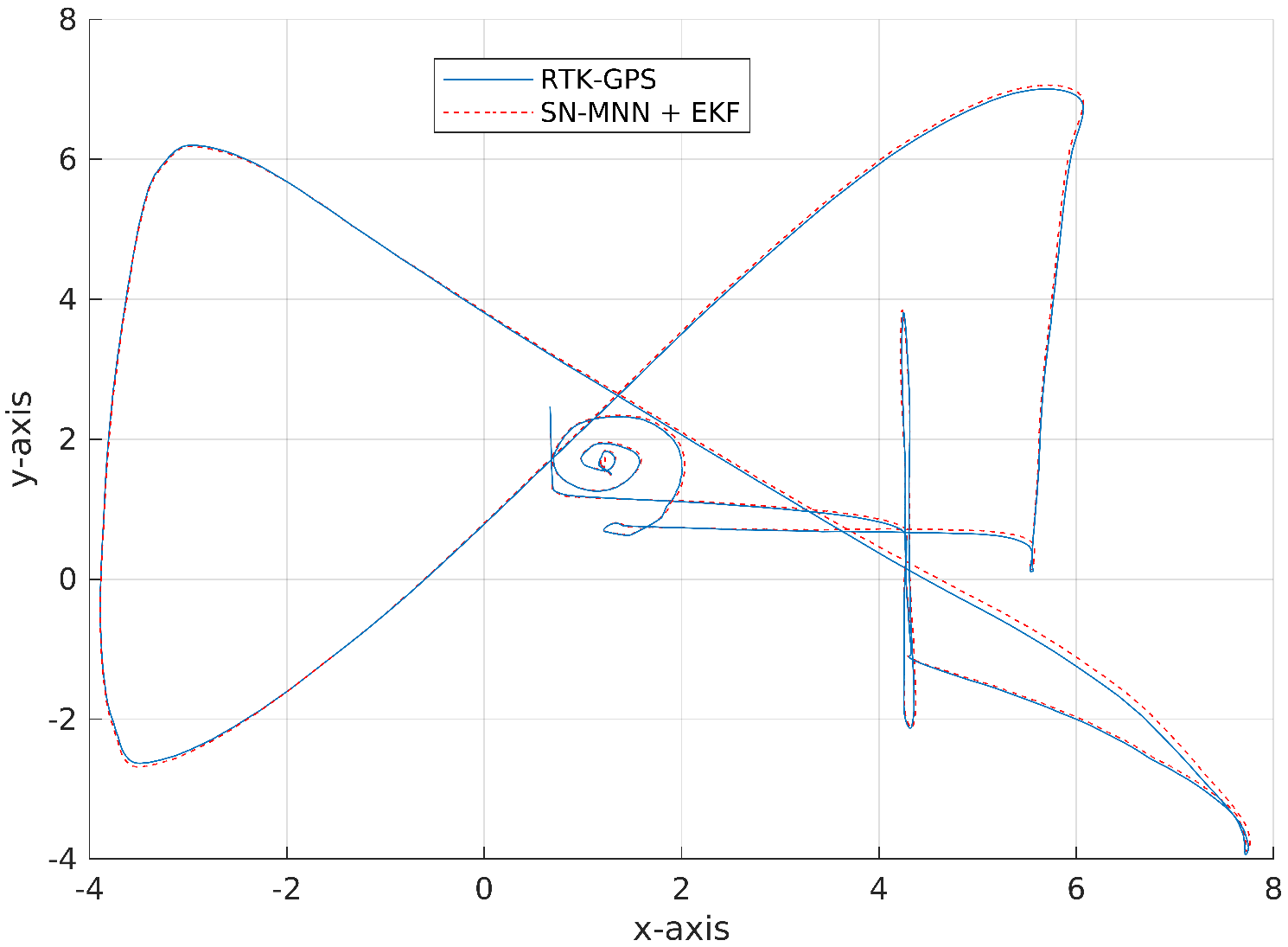}

    \includegraphics[scale=0.4]{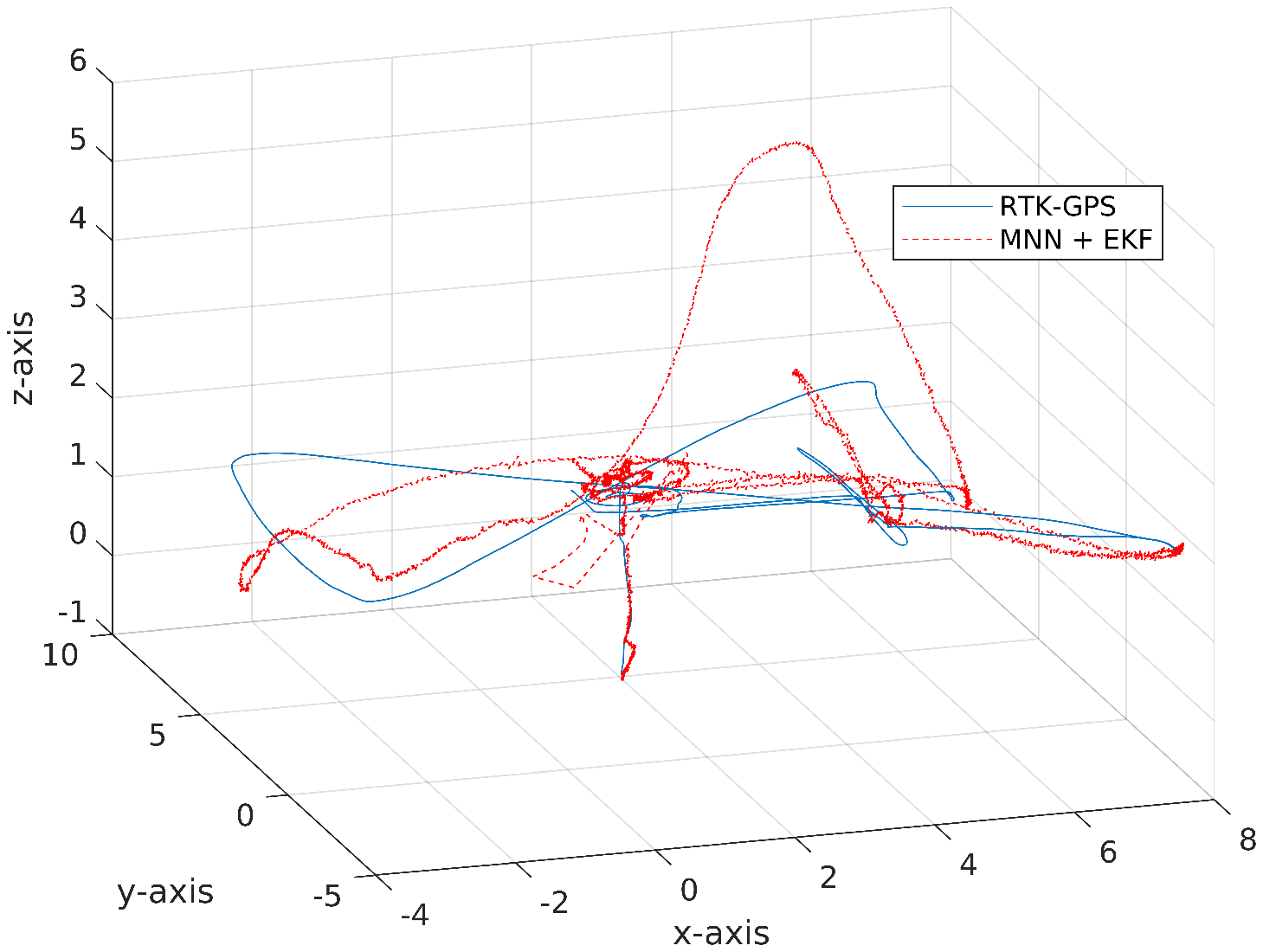}
    \hspace{0.85cm}
    \includegraphics[scale=0.4]{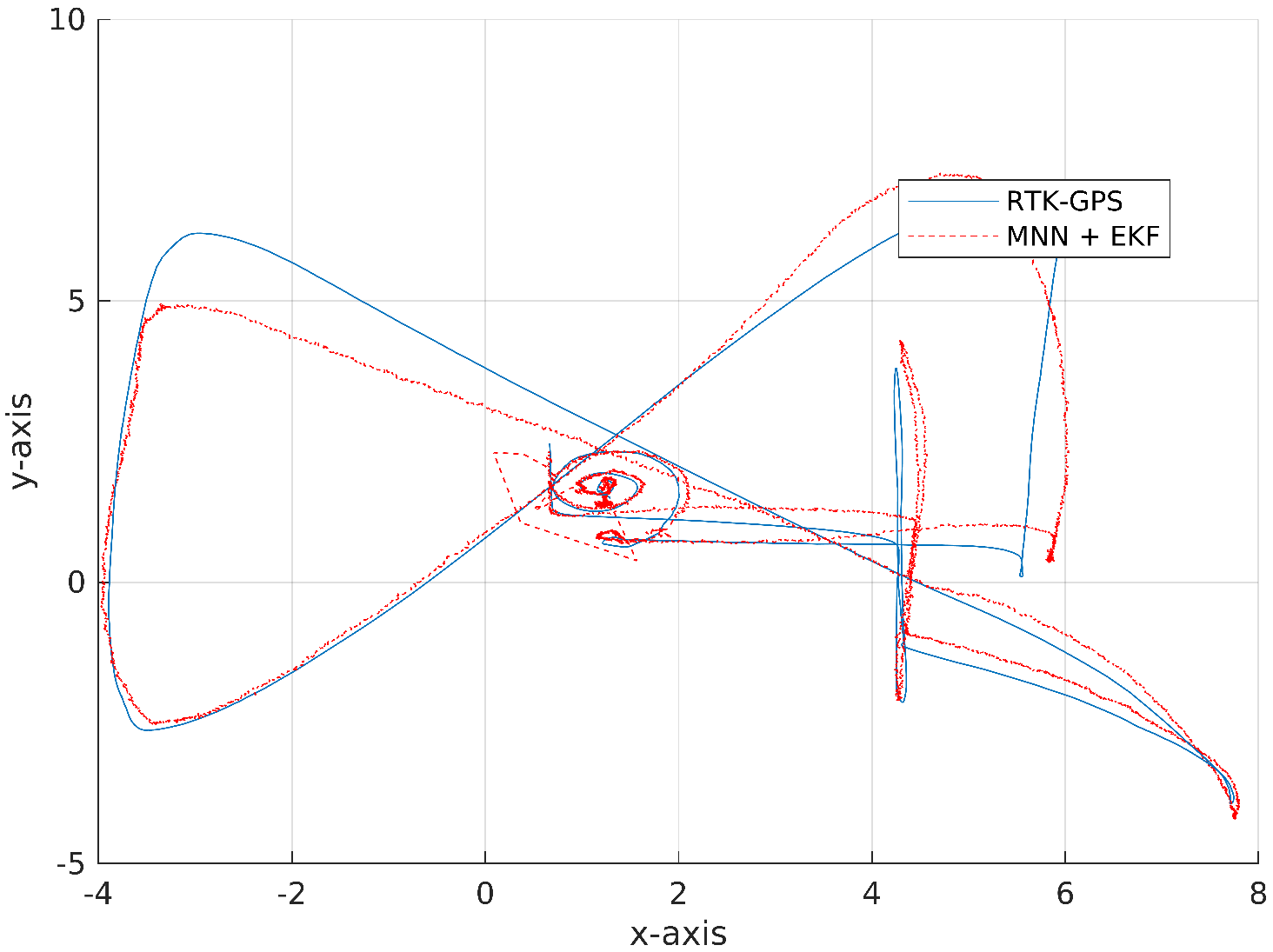}
    \caption{Figures illustrate the performance of the proposed algorithm with and without \textit{spectral normalization} for the first sample test trajectory. For convenience, the top views are shown on the right, corresponding to their 3D trajectories. The second-row figures show the performance without the spectral normalization process.}
    \label{fig:test_1}
\end{figure}

\begin{figure}[!htb]
    \centering
    \includegraphics[scale=0.4]{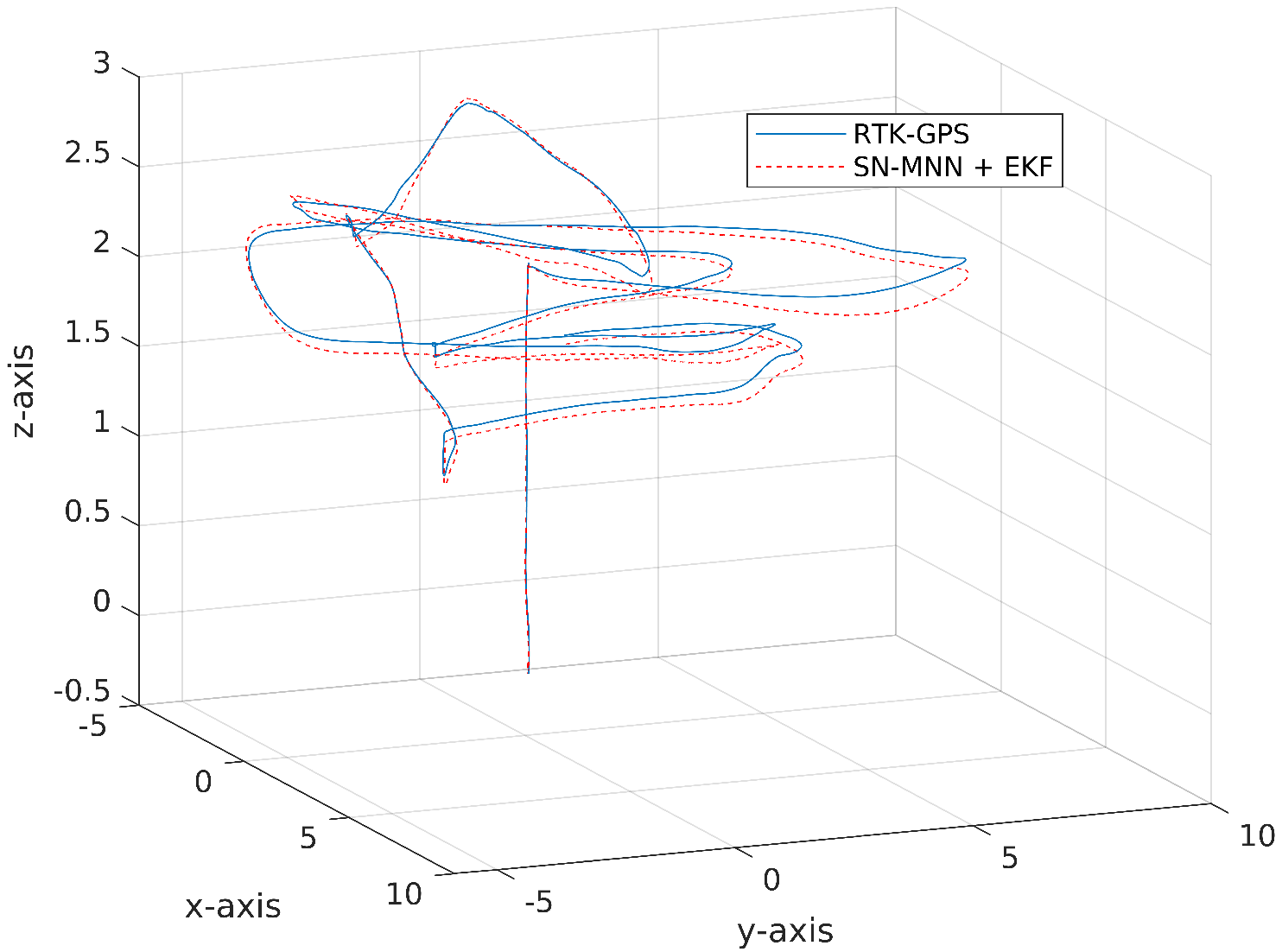}
    \hspace{0.85cm}
    \includegraphics[scale=0.4]{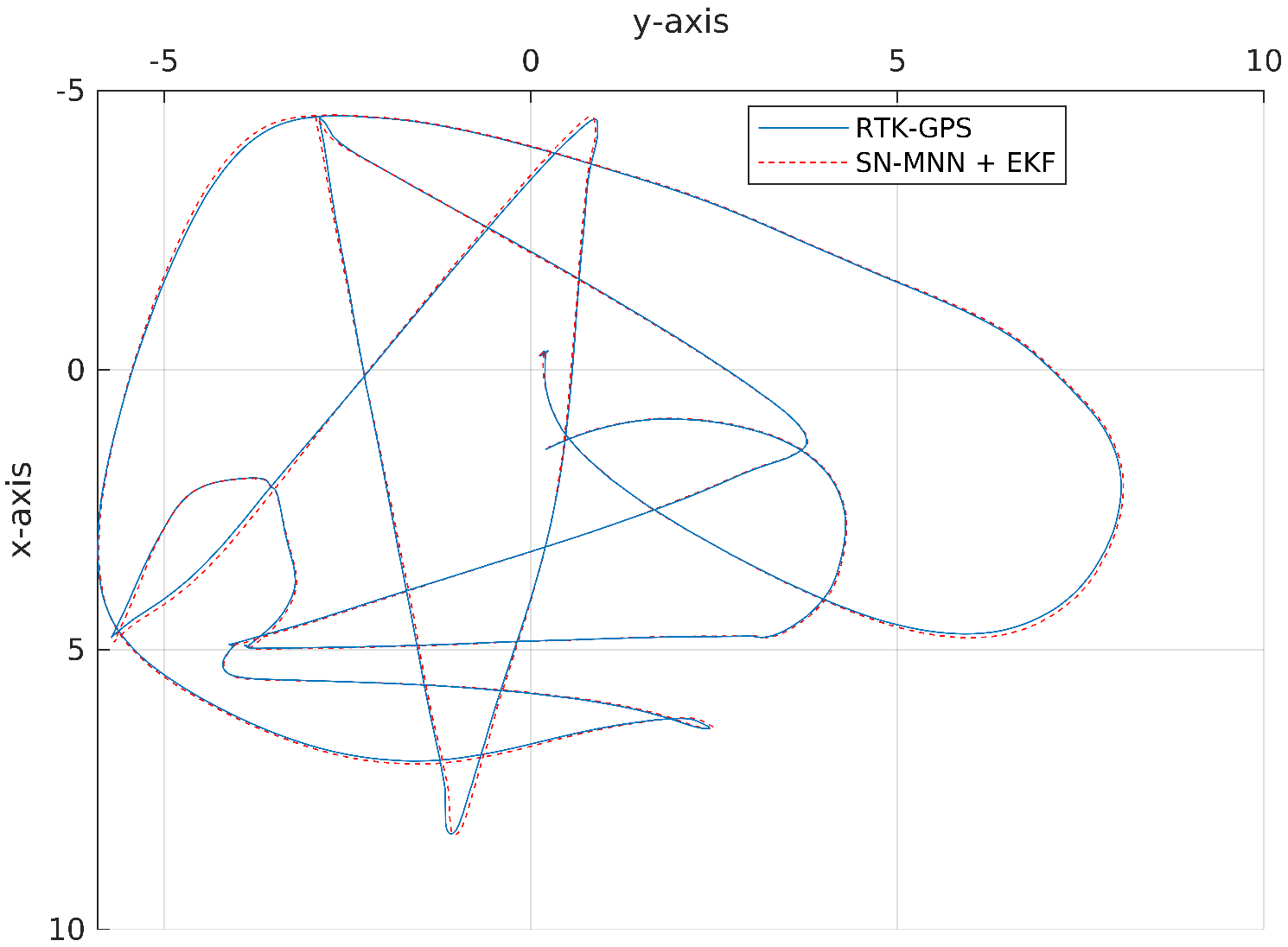}

    \includegraphics[scale=0.4]{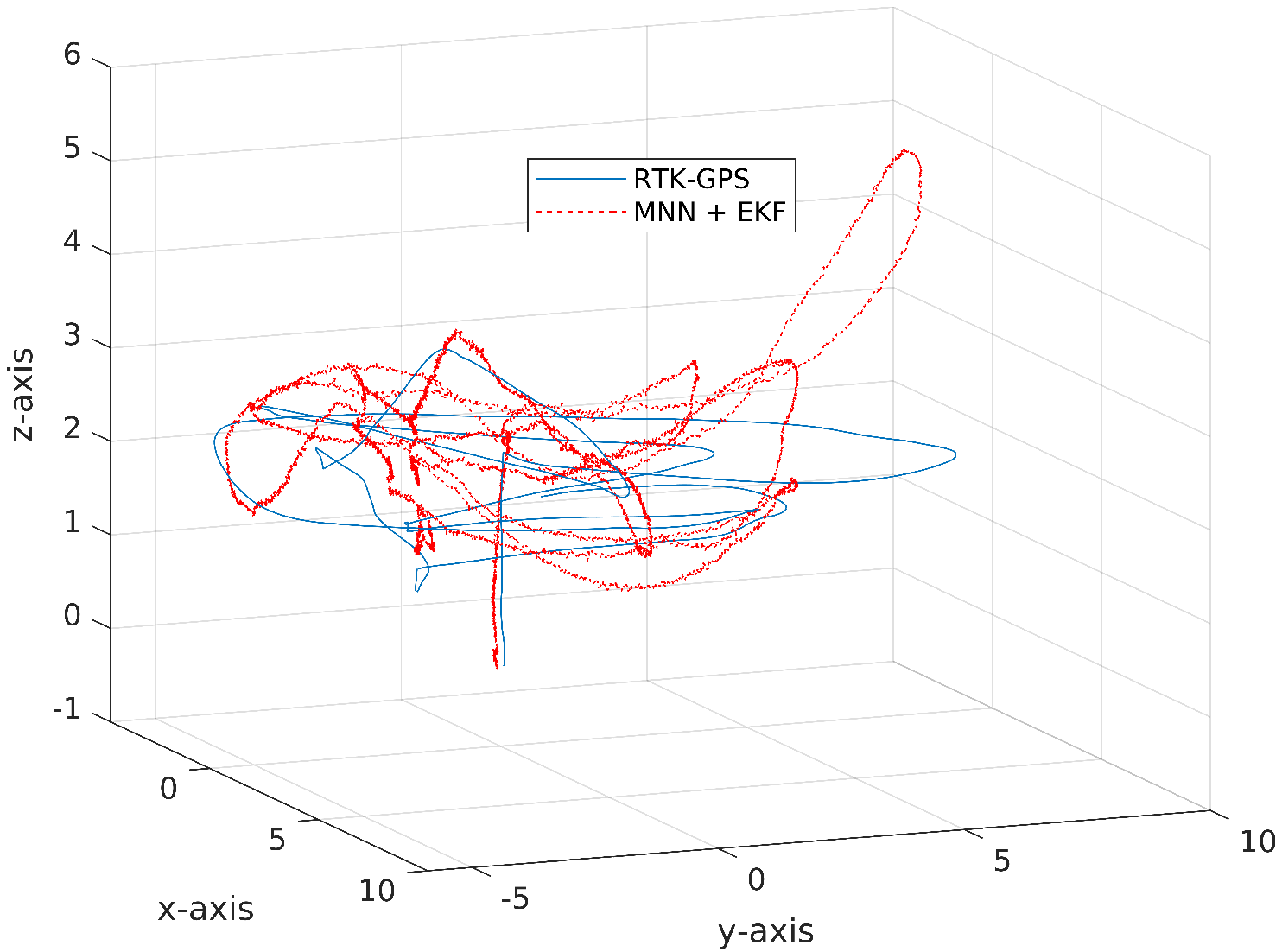}
    \hspace{0.85cm}
    \includegraphics[scale=0.4]{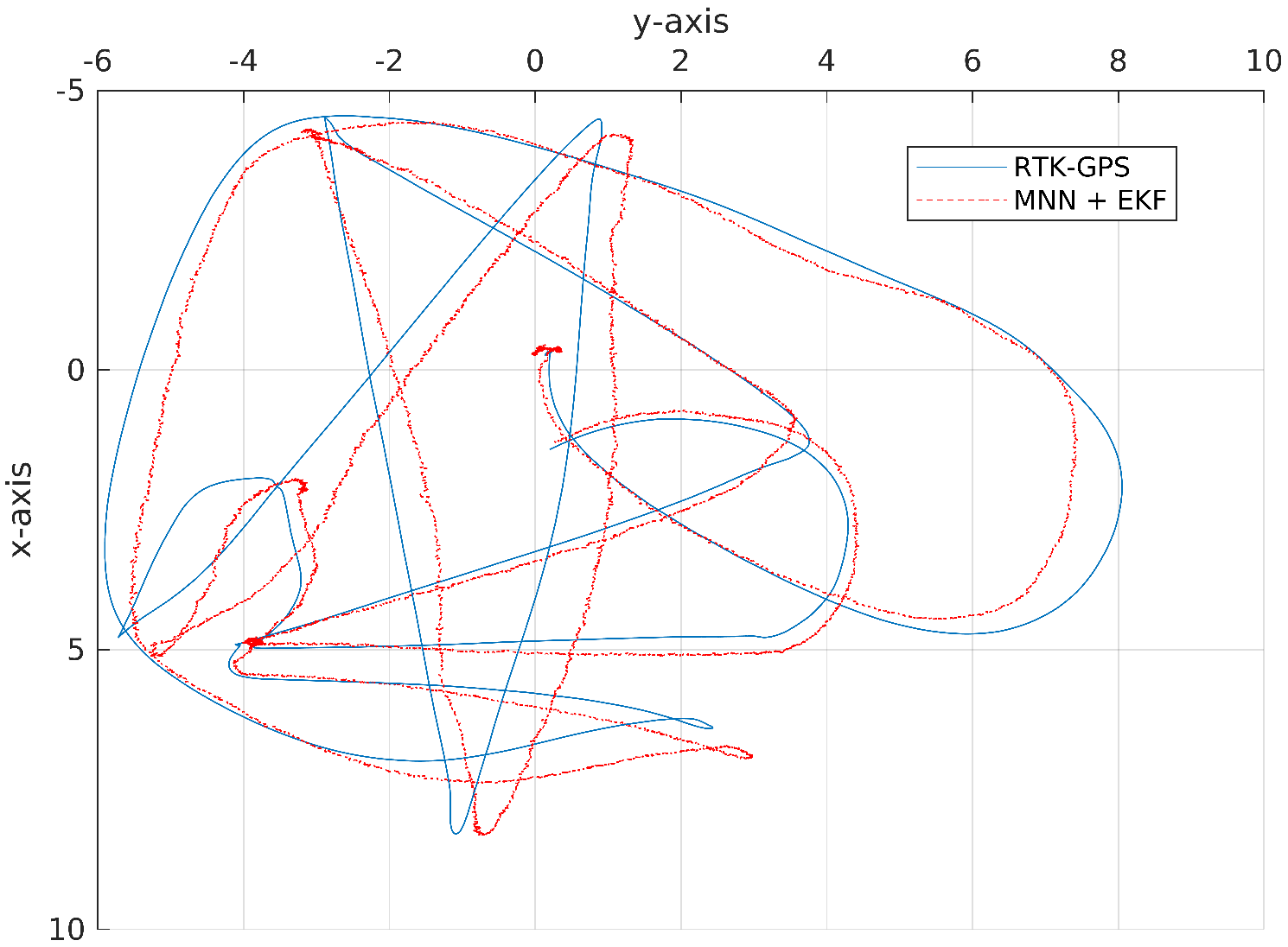}
    \caption{Figures illustrate the performance of the proposed algorithm with and without \textit{spectral normalization} for the second sample test trajectory. For convenience, the top views are shown on the right, corresponding to their 3D trajectories. The second-row figures show the performance without the spectral normalization process.}
    \label{fig:test_2}
\end{figure}

\begin{table}[!htb]
    \centering
    \caption{RMSE comparison}
    \label{tab:rmse}
    \begin{tabular}{ccccc} \toprule
          & VINS-Mono & VIO & MNN + EKF & \textbf{SN-MNN + EKF} \\ \midrule
         RMSE (m) & 0.18 & 0.13 & 0.542 & $\mathbf{0.05953}$  \\ \bottomrule
    \end{tabular}
\end{table}
The outputs for two sample test trajectories are shown in Fig. \ref{fig:test_1} and Fig. \ref{fig:test_2} after the \texttt{PX4-ECL} state fusion is performed. It must be noted that the GPS fusion only affects the position and linear velocity of the UAV, not its rotational components. The RMSE between the estimated positions and the actual positions of the UAV for the course of the entire test flight duration (all test trajectories combined) is around $6cm$, whereas, for the network without spectral normalization, the RMSE is around $55cm$. Further, Table 1 shows a comparison with two state-of-art methods, namely the Visual Inertial Navigation System (VINS) and the Visual Inertial Odometry (VIO). The RMSE reported by these methods is compared along with the proposed method in this work. It can be seen that the RMSE reduces by about $60\%$ for VINS-mono\citep{8421746} and about $40\%$ for VIO\citep{loianno2016visual}. Further, the algorithm is computationally light, as one has to implement only the trained feedforward SN-MNN model. This implementation can be done directly on the flight controller, requiring no additional onboard computer. Thus, from the above results, it can be concluded that the proposed algorithm can be used to estimate the UAV states from the rotor RPM reliably. 

\bibliography{references}

\begin{thebibliography}{27}
\providecommand{\natexlab}[1]{#1}
\providecommand{\url}[1]{\texttt{#1}}
\expandafter\ifx\csname urlstyle\endcsname\relax
  \providecommand{\doi}[1]{doi: #1}\else
  \providecommand{\doi}{doi: \begingroup \urlstyle{rm}\Url}\fi

\bibitem[Al-Sharman et~al.(2019)Al-Sharman, Zweiri, Jaradat, Al-Husari, Gan,
  and Seneviratne]{al2019deep}
Mohammad~K Al-Sharman, Yahya Zweiri, Mohammad Abdel~Kareem Jaradat, Raghad
  Al-Husari, Dongming Gan, and Lakmal~D Seneviratne.
\newblock Deep-learning-based neural network training for state estimation
  enhancement: Application to attitude estimation.
\newblock \emph{IEEE Transactions on Instrumentation and Measurement},
  69\penalty0 (1):\penalty0 24--34, 2019.

\bibitem[Assad et~al.(2020)Assad, Khalaf, and Chouaib]{assad2020radial}
Ammar Assad, Wassim Khalaf, and Ibrahim Chouaib.
\newblock Radial basis function kalman filter for attitude estimation in
  {GPS}-denied environment.
\newblock \emph{IET Radar, Sonar \& Navigation}, 14\penalty0 (5):\penalty0
  736--746, 2020.

\bibitem[Balamurugan et~al.(2016)Balamurugan, Valarmathi, and
  Naidu]{balamurugan2016survey}
G~Balamurugan, J~Valarmathi, and VPS Naidu.
\newblock Survey on uav navigation in gps denied environments.
\newblock In \emph{2016 International conference on signal processing,
  communication, power and embedded system (SCOPES)}, pages 198--204. IEEE,
  2016.

\bibitem[Forster et~al.(2013)Forster, Sabatta, Siegwart, and
  Scaramuzza]{forster2013rfid}
Christian Forster, Deon Sabatta, Roland Siegwart, and Davide Scaramuzza.
\newblock {RFID}-based hybrid metric-topological slam for {GPS}-denied
  environments.
\newblock In \emph{2013 IEEE International Conference on Robotics and
  Automation}, pages 5228--5234. IEEE, 2013.

\bibitem[{Geodetic System}(2017)]{wiki2017geodetic}
{Geodetic System}.
\newblock Geodetic system --- the {GIS} encyclopedia, 2017.
\newblock URL \url{http://wiki.gis.com/wiki/index.php/Geodetic_system}.
\newblock [Online; modified 13-February-2017].

\bibitem[Karney(2015)]{karney2015geographiclib}
Charles Karney.
\newblock Geographiclib.
\newblock \emph{online at http://geographiclib. sourceforge. net}, 2015.

\bibitem[Khattak et~al.(2019)Khattak, Mascarich, Dang, Papachristos, and
  Alexis]{khattak2019robust}
Shehryar Khattak, Frank Mascarich, Tung Dang, Christos Papachristos, and Kostas
  Alexis.
\newblock Robust thermal-inertial localization for aerial robots: A case for
  direct methods.
\newblock In \emph{2019 International Conference on Unmanned Aircraft Systems
  (ICUAS)}, pages 1061--1068. IEEE, 2019.

\bibitem[Kim et~al.(2005)Kim, Sukkarieh, et~al.]{kim20056dof}
Jonghyuk Kim, Salah Sukkarieh, et~al.
\newblock 6dof slam aided {GNSS}/{INS} navigation in {GNSS} denied and unknown
  environments.
\newblock \emph{Positioning}, 1\penalty0 (09), 2005.

\bibitem[Leontaritis and Billings(1985)]{leontaritis1985input}
IJ~Leontaritis and Stephen~A Billings.
\newblock Input-output parametric models for non-linear systems part i:
  deterministic non-linear systems.
\newblock \emph{International journal of control}, 41\penalty0 (2):\penalty0
  303--328, 1985.

\bibitem[Lindstrom et~al.(2020)Lindstrom, Christensen, and
  Gunther]{lindstrom2020investigation}
Colton Lindstrom, Randall Christensen, and Jacob Gunther.
\newblock An investigation of {GPS}-denied navigation using airborne radar
  telemetry.
\newblock In \emph{2020 IEEE/ION Position, Location and Navigation Symposium
  (PLANS)}, pages 168--176, 2020.

\bibitem[Loianno et~al.(2016)Loianno, Watterson, and Kumar]{loianno2016visual}
Giuseppe Loianno, Michael Watterson, and Vijay Kumar.
\newblock Visual inertial odometry for quadrotors on se (3).
\newblock In \emph{2016 IEEE International Conference on Robotics and
  Automation (ICRA)}, pages 1544--1551. IEEE, 2016.

\bibitem[Majdik et~al.(2015)Majdik, Verda, Albers-Schoenberg, and
  Scaramuzza]{majdik2015air}
Andr{\'a}s~L Majdik, Damiano Verda, Yves Albers-Schoenberg, and Davide
  Scaramuzza.
\newblock Air-ground matching: Appearance-based {GPS}-denied urban localization
  of micro aerial vehicles.
\newblock \emph{Journal of Field Robotics}, 32\penalty0 (7):\penalty0
  1015--1039, 2015.

\bibitem[Mebarki and Lippiello(2014)]{mebarki2014image}
Rafik Mebarki and Vincenzo Lippiello.
\newblock Image moments-based velocity estimation of {UAV}s in {GPS} denied
  environments.
\newblock In \emph{2014 IEEE International Symposium on Safety, Security, and
  Rescue Robotics (2014)}, pages 1--6. IEEE, 2014.

\bibitem[Mohta et~al.(2018)Mohta, Watterson, Mulgaonkar, Liu, Qu, Makineni,
  Saulnier, Sun, Zhu, Delmerico, et~al.]{mohta2018fast}
Kartik Mohta, Michael Watterson, Yash Mulgaonkar, Sikang Liu, Chao Qu, Anurag
  Makineni, Kelsey Saulnier, Ke~Sun, Alex Zhu, Jeffrey Delmerico, et~al.
\newblock Fast, autonomous flight in {GPS}-denied and cluttered environments.
\newblock \emph{Journal of Field Robotics}, 35\penalty0 (1):\penalty0 101--120,
  2018.

\bibitem[Qin et~al.(2018)Qin, Li, and Shen]{8421746}
Tong Qin, Peiliang Li, and Shaojie Shen.
\newblock Vins-mono: A robust and versatile monocular visual-inertial state
  estimator.
\newblock \emph{IEEE Transactions on Robotics}, 34\penalty0 (4):\penalty0
  1004--1020, 2018.
\newblock \doi{10.1109/TRO.2018.2853729}.

\bibitem[Rao(2022)]{nish2022supp_mat}
Nishanth Rao.
\newblock {Supplementary Material for this paper}.
\newblock
  \url{https://drive.google.com/drive/folders/1JhRFm5EMxU0DQkKRFG7VsiZ9myiokVK_?usp=sharing},
  2022.
\newblock [Online Google drive].

\bibitem[Rao and Sundaram(2021)]{9534209}
Nishanth Rao and Suresh Sundaram.
\newblock Spatio-temporal look-ahead trajectory prediction using memory neural
  network.
\newblock In \emph{2021 International Joint Conference on Neural Networks
  (IJCNN)}, pages 1--8, 2021.
\newblock \doi{10.1109/IJCNN52387.2021.9534209}.

\bibitem[Riseborough and Team(2016)]{px4ecl}
Paul Riseborough and Team.
\newblock {PX4} {E}stimation and {C}ontrol {L}ibrary.
\newblock \url{https://github.com/PX4/PX4-ECL}, 2016.

\bibitem[Sastry et~al.(1994)Sastry, Santharam, and
  Unnikrishnan]{sastry1994memory}
PS~Sastry, G~Santharam, and KP~Unnikrishnan.
\newblock Memory neuron networks for identification and control of dynamical
  systems.
\newblock \emph{IEEE transactions on neural networks}, 5\penalty0 (2):\penalty0
  306--319, 1994.

\bibitem[Scaramuzza et~al.(2014)Scaramuzza, Achtelik, Doitsidis, Friedrich,
  Kosmatopoulos, Martinelli, Achtelik, Chli, Chatzichristofis, Kneip,
  et~al.]{scaramuzza2014vision}
Davide Scaramuzza, Michael~C Achtelik, Lefteris Doitsidis, Fraundorfer
  Friedrich, Elias Kosmatopoulos, Agostino Martinelli, Markus~W Achtelik,
  Margarita Chli, Savvas Chatzichristofis, Laurent Kneip, et~al.
\newblock Vision-controlled micro flying robots: from system design to
  autonomous navigation and mapping in {GPS}-denied environments.
\newblock \emph{IEEE Robotics \& Automation Magazine}, 21\penalty0
  (3):\penalty0 26--40, 2014.

\bibitem[Shi et~al.(2019)Shi, Shi, O’Connell, Yu, Azizzadenesheli,
  Anandkumar, Yue, and Chung]{shi2019neural}
Guanya Shi, Xichen Shi, Michael O’Connell, Rose Yu, Kamyar Azizzadenesheli,
  Animashree Anandkumar, Yisong Yue, and Soon-Jo Chung.
\newblock Neural lander: Stable drone landing control using learned dynamics.
\newblock In \emph{2019 International Conference on Robotics and Automation
  (ICRA)}, pages 9784--9790. IEEE, 2019.

\bibitem[Tsai and Zhuang(2016)]{tsai2016optical}
Shang-En Tsai and Shih-Hsien Zhuang.
\newblock Optical flow sensor integrated navigation system for quadrotor in
  {GPS}-denied environment.
\newblock In \emph{2016 International Conference on Robotics and Automation
  Engineering (ICRAE)}, pages 87--91. IEEE, 2016.

\bibitem[Urzua et~al.(2017)Urzua, Mungu{\'\i}a, and Grau]{urzua2017vision}
Sarquis Urzua, Rodrigo Mungu{\'\i}a, and Antoni Grau.
\newblock Vision-based slam system for {MAV}s in {GPS}-denied environments.
\newblock \emph{International Journal of Micro Air Vehicles}, 9\penalty0
  (4):\penalty0 283--296, 2017.

\bibitem[Vermeille(2002)]{vermeille2002direct}
Hugues Vermeille.
\newblock Direct transformation from geocentric coordinates to geodetic
  coordinates.
\newblock \emph{Journal of Geodesy}, 76\penalty0 (8):\penalty0 451--454, 2002.

\bibitem[Vermeille(2011)]{vermeille2011analytical}
Hugues Vermeille.
\newblock An analytical method to transform geocentric into geodetic
  coordinates.
\newblock \emph{Journal of Geodesy}, 85\penalty0 (2):\penalty0 105--117, 2011.

\bibitem[Zahran et~al.(2019)Zahran, Moussa, and El-Sheimy]{zahran2019enhanced}
Shady Zahran, Adel Moussa, and Naser El-Sheimy.
\newblock Enhanced drone navigation in {GNSS} denied environment using {VDM}
  and hall effect sensor.
\newblock \emph{ISPRS International Journal of Geo-Information}, 8\penalty0
  (4):\penalty0 169, 2019.

\bibitem[Zhu et~al.(2020)Zhu, Guo, Jiang, Xue, Li, Han, and Chen]{zhu2020mimu}
Kai Zhu, Xuan Guo, Changhui Jiang, Yujingyang Xue, Yuanjun Li, Lin Han, and
  Yuwei Chen.
\newblock {MIMU}/odometer fusion with state constraints for vehicle positioning
  during beidou signal outage: Testing and results.
\newblock \emph{Sensors}, 20\penalty0 (8):\penalty0 2302, 2020.

\end{thebibliography}
\nocite{*}

\end{document}